\begin{document}

\title{  Zeroth-order Asynchronous  Doubly  Stochastic  Algorithm with  Variance Reduction}

\author{\name Bin Gu \email jsgubin@gmail.com      \\
   \name Zhouyuan Huo \email {zhouyuan.huo@mavs.uta.edu} \\
   \name Heng Huang \email {heng@uta.edu} \\
\addr Department of Computer Science and Engineering\\
       University of Texas at Arlington\\
      }

\editor{}

\maketitle

\begin{abstract}
 Zeroth-order (derivative-free) optimization attracts a lot of attention in machine learning,  because   explicit gradient
calculations  may be computationally expensive or infeasible. To handle large scale problems both in volume and dimension, recently asynchronous doubly stochastic zeroth-order   algorithms were proposed. The  convergence rate of existing asynchronous doubly stochastic  zeroth order algorithms  is $O(\frac{1}{\sqrt{T}})$ (also for the sequential stochastic  zeroth-order optimization algorithms).  In this paper, we focus on the finite sums of smooth but not necessarily convex functions, and propose an asynchronous doubly stochastic   zeroth-order optimization algorithm using the accelerated technology of variance reduction  (AsyDSZOVR). Rigorous theoretical analysis show that  the convergence rate can be improved from $O(\frac{1}{\sqrt{T}})$  the  best result of existing algorithms  to  $O(\frac{1}{T})$. Also our theoretical results is an improvement to the ones of the sequential stochastic  zeroth-order optimization algorithms.
\end{abstract}

\begin{keywords}
stochastic optimization, zeroth-order, parallel computing,  lock-free
\end{keywords}

\section{Introduction}
 Zeroth-order (derivative-free) optimization attracts a lot of attention in machine learning,  because   explicit gradient
calculations  may be computationally expensive or infeasible. As we know, for a lot of machine learning optimization problems, such as graphical model inference \citep{wainwright2008graphical}, structured-prediction \citep{taskar2005learning}, and so on, it is difficult to give the explicit derivatives for the objective functions. For some black box learning model, such as black box neural networks \citep{lian2016comprehensive}, it is infeasible to  give the explicit derivatives. Also, for  bandit problems \citep{bubeck2012regret}, such as advertisement selection for search engines, it is infeasible to  give the explicit derivatives  of the objective functions because  only  observations of function values are available. Since zeroth-order methods estimate gradient  based on only  two point observations, it  is the best and only  choice of the optimization for  above scenarios.

Because the era of big data has
arrived, asynchronous parallel algorithms  for stochastic optimization have received huge successes in  theory and practice recently. Most of these  asynchronous parallel stochastic algorithms  are built on the first-order derivative or second-order information (e.g.  (approximate) Hessian matrix)  of the objective function. For example, Hogwild! \citep{recht2011hogwild} (the first lock-free asynchronous parallel stochastic gradient descent (SGD) algorithm) uses the first-order derivative to update the solution for smooth convex  functions. The other variants  of asynchronous parallel SGD algorithm \citep{mania2015perturbed,lian2015asynchronous,huo2016asynchronous,zhao2016fast} also use the first-order derivative to update the solution for smooth convex or nonconvex functions. For a composite of a smooth (possibly non-convex) function and a non-smooth convex function, the first-order derivative is embedded in the proximal operator \citep{Razaviyayn2014ParallelSC,liu2015asynchronous,you2016asynchronous}.  Also,   second-order information (e.g. (approximate) Hessian matrix) \citep{byrd2016stochastic} can be used to accelerate the optimization.

As the reasons  mentioned previously, designing asynchronous  stochastic  zeroth order algorithms is important and  urgent. As far as we know, the only work of  asynchronous  stochastic  zeroth order algorithm (AsySZO) is \citep{lian2016comprehensive}. They prove the convergence rate  $O(\frac{1}{T} + \frac{1}{\sqrt{T}})$. To the best of our knowledge, the convergence rates of existing  sequential  stochastic  zeroth order algorithms \citep{nesterov2011random,jamieson2012query,duchi2012randomized,agarwal2011stochastic} are $O(\frac{1}{T} + \frac{1}{\sqrt{T}})$ or $O \left (\frac{1}{\sqrt{T}} \right )$. Basically, the convergence rates of these algorithms can be viewed as $O \left (\frac{1}{\sqrt{T}} \right )$ because the term $\frac{1}{\sqrt{T}}$ dominates $\frac{1}{T} + \frac{1}{\sqrt{T}}$. Motivated by improving the convergence rate of SGD from $O \left (\frac{1}{\sqrt{T}}\right )$ to $O \left  (\frac{1}{T} \right )$, it is highly desirable  to design an accelerated asynchronous  stochastic  zeroth order algorithm with the convergence rate $O(\frac{1}{T})$.

 In this paper, we focus on  the finite sums of smooth but not necessarily convex functions   as follows.
   \begin{eqnarray}\label{formulation1}
\min_{x\in \mathbb{R}^N} f(x) =  \frac{1}{l}\sum_{i=1}^{l} f_i(x)
\end{eqnarray}
where  $f_i:\mathbb{R}^N \mapsto \mathbb{R}$ is a smooth, possibly
non-convex function function. The formulation (\ref{formulation1}) covers  an extensive number of  machine learning problems, for example, logistic regression \citep{freedman2009statistical}, ridge regression \citep{shen2013novel}, least squares SVM \citep{suykens1999least} and so on.

In this paper, we propose an asynchronous doubly stochastic   zeroth-order optimization algorithm using the accelerated technology of variance reduction  (AsyDSZOVR).  Our AsyDSZOVR randomly select a set of samples  and a set of  features simultaneously to handle large scale problems both in volume and dimension.
 Rigorous theoretical analysis show that  the convergence rate can be improved from $O(\frac{1}{\sqrt{T}})$  the  best result of existing algorithms  to  $O(\frac{1}{T})$. Also our theoretical results is an improvement to the ones of the sequential stochastic  zeroth-order optimization algorithms.

We organize the rest of the paper as follows.  In section \ref{section_algorithm}, we propose our AsySBCDVR algorithm. In Section \ref{convergence_analysis}, we prove the  convergence rate for AsySBCDVR.  Finally, we give some concluding
remarks in Section \ref{conclusion}.

 \section{Algorithms}\label{section_algorithm}
 In this section, we propose our AsyDSZOVR. In this paper, we focus on  the parallel environment with shared memory, such as  multi-core processors and GPU-accelerators, without any lock. Because the parallel computing pattern in the parallel environment with distributed memory can be  equivalent to the one in the  parallel environment with shared memory having reading and writing locks, our AsyDSZOVR can also work in the parallel environment with distributed memory.

 The basic parallel computing pattern includes  three steps, i.e., read, compute, update.
  Specifically, if the parallel computing is asynchronous,  all cores  repeat the
three steps independently and concurrently without any lock. We give a more detailed descriptions of the three steps as following.
 \begin{enumerate}
 \item \textbf{Read:} Read the vector $x$ from the shared memory to the local memory
without reading lock.
 \item \textbf{Compute:} Randomly  choose a component function $f_i$ or a mini-batch $\mathcal{B}$ of the component functions, and a  set of coordinates $J$, and locally compute an unbiased (approximate) gradient.
 \item \textbf{Update:} Update the set of coordinates $J$ of the vector $x$ in the shared memory, based on the  unbiased (approximate) gradient  without writing lock.
 \end{enumerate}
 To highlight the differences of AsySZO and our proposed AsyDSZOVR, we first give brief review of AsySZO, and present our AsyDSZOVR based on the above framework of parallel computing.  We also summarize the differences of of AsySZO and  AsyDSZOVR in Table \ref{table:methods}.
  \begin{table*}[ht]
 \center
 \caption{Comparisons  of AsySZO and AsyDSZOVR.} \setlength{\tabcolsep}{1mm}
\begin{tabular}{c|c|c|c|c|c}
\hline
\textbf{Algorithm}  & \textbf{Accelerated}  &  \textbf{Step size} &    \textbf{Mini-batch} & \textbf{$\widehat{x}_t - {x}_t$ or $\widehat{x}^{s+1}_t - {x}^{s+1}_t$ } & \textbf{ Rate}   \\ \hline
AsySZO  & No & Dynamic vanishing & No &  $\gamma \sum_{t' \in K(t)}    G_{J(t')}(\widehat{x}_t;f_i)$  & $O \left (\frac{1}{\sqrt{T}}\right )$  \\
AsyDSZOVR  & Yes & Constant & Yes &  $ \gamma \sum_{t' \in K(t)}B_{t'}^{s+1}   \widehat{v}^{s+1}_{J(t')}$ & $O \left  (\frac{1}{T} \right )$  \\
 \hline
\end{tabular}
\label{table:methods}
\end{table*}
 \subsection{Brief Review of AsySZO}
Actually, the existing  asynchronous  stochastic  zeroth order algorithm (i.e., AsySZO) proposed by \citep{lian2016comprehensive} strictly follows the three steps. Specifically, the unbiased (approximate) gradient in the `\textbf{Compute}' step is computed based on a randomly choosed component function $f_i$ as  \begin{eqnarray}
G_{ J}(x;f_i) = \sum_{j \in J} \frac{N}{2 Y \mu_j } \left ( f_i (x+ \mu_j e_j) - f_i (x- \mu_j e_j) \right )e_j
\end{eqnarray}
where $\mu_j$ is the approximate parameter for the $j$-th coordinate, and $e_j$ is the  zero vector in $\mathbb{R}^N$ except that the coordinates indexed by $j$ equal to $1$. Thus, the updating rule in the `\textbf{Update}' step is $(x_{t+1}^{s+1})_{J} \leftarrow  \left ( (x_{t}^{s+1}) - \gamma G_{J}(\widehat{x}^{s+1}_t;f_i) \right )_{J} $, where $\gamma$ is the step size. The pseudocode  of AsySZO can be found in Algorithm \ref{algorithm1}.

Because AsySZO does not use the reading  and writing locks,  the vector $\widehat{x}^{s+1}_t$ read into the local  memory may be inconsistent to the vector ${x}^{s+1}_t$ in the shared memory, which means that some components of $\widehat{x}^{s+1}_t$ are same with the ones in ${x}^{s+1}_t$, but others are different to the ones in  ${x}^{s+1}_t$. In \citep{lian2016comprehensive}, they present ${x}^{s+1}_t$ as following.
  \begin{eqnarray} \label{pi_test1.1}
{x}_t = \widehat{x}_t - \gamma \sum_{t' \in K(t)}    G_{J(t')}(\widehat{x}_t;f_i)
\end{eqnarray}
where $K(t)$ is a  set  of iterations. As mentioned in \citep{mania2015perturbed,zhao2016fast}, this  representation could not formulate the conflicts of two writing operations. For AsyDSZOVR, we will give a more reasonable representation of ${x}^{s+1}_t$.

\begin{algorithm}
\renewcommand{\algorithmicrequire}{\textbf{Input:}}
\renewcommand{\algorithmicensure}{\textbf{Output:}}
\caption{Asynchronous    Stochastic  Zeroth-order Optimization  (AsySZO)}
\begin{algorithmic}[1]
\REQUIRE $\gamma$,  $S$, and $m$.
\ENSURE $x^{S}$.
 \STATE  Initialize  $x^0 \in \mathbb{R}^d$, $p$ threads.
 \STATE \textit{For each thread}, do:
\FOR{$t=0,1,2,m-1$}
\STATE Randomly select a component function  $f_i$ from $\{1, ...,l\}$ with equal probability.
\STATE Randomly choose a set of coordinates  $J(t)$ from $\{1, ...,n\}$ with equal probability.

 \STATE   $(x_{t+1}^{s+1})_{J(t)} \leftarrow  \left ( (x_{t}^{s+1}) - \gamma G_{J(t)}(\widehat{x}^{s+1}_t;f_i) \right )_{J(t)} $. 

\STATE $(x_{t+1}^{s+1})_{\setminus J(t)} \leftarrow (x_{t}^{s+1})_{\setminus J(t)} $.
\ENDFOR
\end{algorithmic}
\label{algorithm1}
\end{algorithm}
\subsection{AsyDSZOVR}
Although $G_{ J}(x;f_i)$ is an unbiased estimate of $G_{ J}(x;f)$, it would have a large variance because it is computed based on one sample. Similar with \citep{huo2016asynchronous,zhao2016fast},  we use the variance reduction to accelerate AsySZO. Thus, AsyDSZOVR has two-layer loops. The outer layer is to parallelly compute the full  approximate gradient
 $G_{ J}(x^s;f) = \frac{1}{l} \sum_{i=1}^l G_{ J}(x^s;f_i)$, where the superscript $s$ denotes the $s$-th outer loop. The inner layer is to parallelly and repeatedly  update the vector $x$ in the shared memory, which also strictly follows the three steps as mentioned previously.
 Specifically,  all cores repeat the
following  steps independently and concurrently without any lock:
 \begin{enumerate}
 \item \textbf{Read:}  Read the vector $x$ from the shared memory to the local memory
without reading lock. We use $\widehat{x}^{s+1}_t$ to denote its value, where the subscript $t$ denotes the $t$-th inner loop.
 \item \textbf{Compute:} Randomly  choose a mini-batch $\mathcal{B}(t)$ of the component functions, and a  set of coordinates $J(t)$  from $\{1, ...,N\}$, and locally compute  $ \widehat{v}^{s+1}_{J(t)} = \frac{1}{|\mathcal{B}(t)|}\sum_{i\in \mathcal{B}(t)}  G_{J(t)}(\widehat{x}^{s+1}_t;f_i)- \frac{1}{|\mathcal{B}(t)|}\sum_{i\in \mathcal{B}(t)} G_{J(t)}(\widetilde{x}^s;f_i)  + G_{J(t)}(\widetilde{x}^s;f)$.
 \item \textbf{Update:} Update the set of coordinates $J(t)$ of the vector $x$ in the shared memory as $(x_{t+1}^{s+1})_{J(t)} \leftarrow  \left ( (x_{t}^{s+1}) - \gamma \widehat{v}^{s+1}_{J(t)} \right )_{J(t)} $   without writing lock.
\end{enumerate}
The detailed description of AsyDSZOVR is  presented in  Algorithm \ref{algorithm2}.  Note that $\widehat{v}^{s+1}_{J(t)} $ computed  locally is an approximation of $G_{ J}(\widehat{x}^{s+1}_{t};f)$, and the expectation  of $\widehat{v}^{s+1}_{J(t)} $ on $\mathcal{B}(t)$ is equal to $G_{ J}(\widehat{x}^{s+1}_{t};f)$ as shown  below.
\begin{eqnarray} \nonumber
\mathbb{E}_{\mathcal{B}(t)}\widehat{v}^{s+1}_{J(t)} & =& \mathbb{E}_{\mathcal{B}(t)}\left ( \frac{1}{|\mathcal{B}(t)|}\sum_{i\in \mathcal{B}(t)}  G_{J(t)}(\widehat{x}^{s+1}_t;f_i)- \frac{1}{|\mathcal{B}(t)|}\sum_{i\in \mathcal{B}(t)} G_{J(t)}(\widetilde{x}^s;f_i)  + G_{J(t)}(\widetilde{x}^s;f) \right )
\\ &=&  G_{J(t)}(\widehat{x}^{s+1}_t;f) - G_{J(t)}(\widetilde{x}^s;f) + G_{J(t)}(\widetilde{x}^s;f)   \nonumber
\\ &=& G_{J(t)}(\widehat{x}^{s+1}_t;f)
\end{eqnarray}
$\widehat{v}^{s+1}_{J(t)} $ is called a stochastic approximation  of $ G_{J(t)}(\widehat{x}^{s+1}_t;f)$. More importantly, we give an upper bound for $\sum_{t=0}^{m-1} \mathbb{E}  \left \|  \widehat{v}^{s+1}_{t} \right \|^2 $ (Lemma \ref{lemma1}). The lemma shows that $\widehat{v}^{s+1}_{J(t)} $ would  vanish after a large number of iterations. Thus, the step size $\gamma$ can be set as a fixed constant, which is different to the one used in AsySZO.

As mentioned in before, $\widehat{x}_t - {x}_t$ used in \cite{lian2016comprehensive}  could not formulate the conflicts of two writing operations. In this paper, we use the following formulation to present $\widehat{x}^{s+1}_t - {x}^{s+1}_t$.
  \begin{eqnarray} \label{pi_test1}
{x}^{s+1}_t = \widehat{x}^{s+1}_t - \gamma \sum_{t' \in K(t)}B_{t'}^{s+1}   \widehat{v}^{s+1}_{J(t')}
\end{eqnarray}
where $K(t)$ is a set  of  inner iterations, $t' \leq t-1$,  $B_{t'}^{s+1}$ is   a diagonal matrix with
diagonal entries either $1$ or $0$ ($0$ denotes that the corresponding coordinate is overwritten by other thread). It is reasonable to assume that  there exists an upper bound $\tau$ such that $\tau \geq t - \min\{t' | t' \in K(t)\}$ (i.e., Assumption \ref{ass0.1}).

\begin{assumption}[Bound of delay]\label{ass0.1}
There exists a  upper bound $\tau$ such that $\tau \geq t - \min\{t' | t' \in K(t)\}$ for all inner iterations $t$ in AsyDSZOVR.
 \end{assumption}

\begin{algorithm}
\renewcommand{\algorithmicrequire}{\textbf{Input:}}
\renewcommand{\algorithmicensure}{\textbf{Output:}}
\caption{Asynchronous  Doubly  Stochastic  Zeroth-order Optimization with  Variance Reduction  (AsyDSZOVR)}
\begin{algorithmic}[1]
\REQUIRE $\gamma$,  $S$, and $m$.
\ENSURE $x^{S}$.
 \STATE  Initialize  $x^0 \in \mathbb{R}^d$, $p$ threads.
\FOR{$s=0,1,2,S-1$}
\STATE $\widetilde{x}^s\leftarrow x^s$
\STATE \textit{All threads parallelly} compute the full fake gradient $G(\widetilde{x}^s;f) =  \sum_{i =1}^l \frac{1}{ l} G(\widetilde{x}^s;f_i)$
\STATE \textit{For each thread}, do:
\FOR{$t=0,1,2,m-1$}
\STATE Randomly sample a mini-batch $\mathcal{B}(t)$ from $\{1, ...,l\}$ with equal probability.
\STATE Randomly choose a set of coordinates  $J(t)$ from $\{1, ...,n\}$ with equal probability.
\STATE Compute $ \widehat{v}^{s+1}_{J(t)} = \frac{1}{|\mathcal{B}(t)|}\sum_{i\in \mathcal{B}(t)} G_{J(t)}(\widehat{x}^{s+1}_t;f_i)- \frac{1}{|\mathcal{B}(t)|}\sum_{i\in \mathcal{B}(t)} G_{J(t)}(\widetilde{x}^s;f_i)  + G_{J(t)}(\widetilde{x}^s;f)$.
 \STATE   $(x_{t+1}^{s+1})_{J(t)} \leftarrow  \left ( (x_{t}^{s+1}) - \gamma \widehat{v}^{s+1}_{J(t)} \right )_{J(t)} $. 

\STATE $(x_{t+1}^{s+1})_{\setminus J(t)} \leftarrow (x_{t}^{s+1})_{\setminus J(t)} $.
\ENDFOR
\STATE ${x}^{s+1}\leftarrow x_{m}^{s+1}$
\ENDFOR
\end{algorithmic}
\label{algorithm2}
\end{algorithm}

\section{Convergence Analysis} \label{convergence_analysis}
In this section, we  prove the convergence  rate of   AsyDSZOVR (Theorem \ref{theorem2} and Corollary \ref{corollary1}). Specifically, we improve the convergence rate of asynchronous stochastic  zeroth-order optimization from $O(\frac{1}{\sqrt{T}})$ to  $O(\frac{1}{T})$. If AsyDSZOVR only uses one thread,  AsyDSZOVR degenerates to the sequential doubly stochastic zeroth-order optimization algorithm with variance
reduction (DSZOVR). Our theoretical analysis can work for this condition, and we have the convergence rate $\frac{1}{T}$ for DSZOVR (Corollary \ref{corollary2}). It is also an improvement to the convergence rates of the existing  sequential stochastic
zeroth-order optimization algorithms \citep{nesterov2011random,jamieson2012query,duchi2012randomized,agarwal2011stochastic}.

Before providing  the theoretical analysis, we   give the definitions of Lipschitz constant on the original gradient, coordinated smooth function, mixtured gradient of the coordinated smooth functions, Lipschitz constant on the mixtured gradient,  and the explanation of $x^s_t$ used in the analysis as follows, which are  critical to  the  analysis of AsyDSZOVR.

 \begin{enumerate}
\item \noindent \textbf{Lipschitz constant on the original gradient:} \ \  For the smooth functions  $f_i$, we have the Lipschitz constant $L$ for $\nabla f_i$  as following.
\begin{assumption}
\label{definition1}
$L $ is the  Lipschitz constant for $\nabla f_i$ ($\forall i \in \{1,\cdots,l\}$) in (\ref{formulation1}). Thus,  $\forall x$ and $\forall y$, $L$-Lipschitz smooth  can be presented as
   \begin{eqnarray}\label{ass3}
\| \nabla f_i(x) - \nabla f_i(y) \|  \leq L \|x - y \|
\end{eqnarray}
Equivalently, $L$-Lipschitz smooth  can  also be written as the formulation (\ref{coordinate_lipschitz_constant2}).
   \begin{eqnarray} \label{coordinate_lipschitz_constant2}
f_i(x) \leq f_i(y) + \langle \nabla f_i(y) , x-y \rangle + \frac{L}{2}  \left \| x-y \right \|^2
\end{eqnarray}

\end{assumption}

\item \textbf{Coordinated smooth function:} Given a function $f(x)$ and a predefined approximation parameter vector $[\mu_1,\mu_2,\cdots,\mu_N]$,  we define a coordinated smooth function $f^j(x)$ w.r.t the $j$-th dimension which was used in  \citep{lian2016comprehensive}.
\begin{eqnarray}\label{equdef2}
f^j(x)=\mathbb{E}_{v\sim U_{[-\mu_j,\mu_j]}}(p(x+ve_j)) = \frac{1}{2\mu_j}\int_{-\mu_j}^{\mu_j} f(x+ve_j) dv
\end{eqnarray}
where ${v\sim U_{[-\mu_j,\mu_j]}}$ means that $v$ follows the uniform distribution over the interval $[-\mu_j,\mu_j]$.  It should be noted that, we have the following equation between $G_j(x,f)$ and $\nabla_j f^j(x)$.
\begin{eqnarray}\label{equdef3}
\nabla f^j(x) &=& \frac{1}{2\mu_j}\int_{-\mu_j}^{\mu_j} \nabla_j   f(x+ve_j) dv
\\ & = & \nonumber \frac{1}{2  \mu_j } \left ( f_i (x+ \mu_j e_j) - f_i (x- \mu_j e_j) \right )e_j = N G_j(x,f)
\end{eqnarray}
In addition, we have
\begin{eqnarray}\label{equdef3.1}
\mathbb{E}_j \|\nabla_j f^j(x) - \nabla_j f(x) \| \leq  \frac{L^2 \sum_{j=1}^N \mu_j^2 }{4N} \stackrel{\rm def}{=} \frac{\omega}{4}
\end{eqnarray}
which is proved in (26) of \citep{lian2016comprehensive}.

\item \noindent \textbf{Mixtured gradient of the coordinated smooth functions:} \ \ Based on the coordinated smooth function $f^j(x)$, we define a mixtured gradient on the coordinated smooth functions as $\sum_{j=1}^N \nabla_j f^j(x)$.

\item \noindent \textbf{Lipschitz constant on the mixtured gradient:} \ \
We assume that    there exists a  Lipschitz constant ($\widetilde{L}$) on  the mixtured gradient as follows.
\begin{assumption}
\label{definition2}
$\widetilde{L}  $ is the  Lipschitz constant for the mixtured gradient $\sum_{j=1}^N \nabla_j f^j(x)$, such that, $\forall x$ and $\forall y$, we have
   \begin{eqnarray}\label{ass4}
\left \| \sum_{j=1}^N \nabla_j f^j(x) - \sum_{j=1}^N \nabla_j f^j(y) \right  \|  \leq \widetilde{L} \|x - y \|
\end{eqnarray}
\end{assumption}
Because $f^j(x)$ is a smooth function of $f(x)$, it is reasonable to have a Lipschitz constant on the mixtured gradient. Specifically, if  $[\mu_1,\mu_2,\cdots,\mu_N]=\textbf{0}$, it is easy to verify that $\widetilde{L}=L$. If $\mu_j= \infty$ for all $j=1,\cdots,N$, it is easy to verify that $\widetilde{L}=0$.  Note that, it is possible that $\widetilde{L} > L$.

Correspondingly,   we assume there exists a relationship constant $\widehat{L}$ between the original gradient and the mixtured gradient, as follows. Note that, it is also possible that $\widehat{L} > 1$.
\begin{assumption}
\label{definition2} For a smooth function $f$, we have the relationship constant $\widehat{L}$ between the original gradient and the mixtured gradient as
   \begin{eqnarray}\label{ass5}
\left \| \sum_{j=1}^N \nabla_j f^j(x)   \right  \|  \leq \widehat{L} \| \nabla f(x)  \|
\end{eqnarray}
\end{assumption}

\item \textbf{$x^s_t$:} As mentioned previously, AsySBCDVR does not use any locks in the reading and writing. Thus,  in the line 10 of Algorithm \ref{algorithm2}, $x^s_t$ (left side of `$\leftarrow$') updated in the shared memory may be inconsistent with the ideal one (right side of `$\leftarrow$') computed by the proximal operator. In the analysis, we use $x^s_t$ to denote the ideal one computed by the proximal operator.  Same as mentioned in \citep{mania2015perturbed}, there might not be an actual time the ideal ones exist in the
shared memory, except the first and last  iterates for each  outer loop. It is noted that, $x^s_0$ and $x^s_m$ are exactly what is stored in shared memory. Thus, we only consider the ideal $x^s_t$ in the analysis.
\end{enumerate}
Then, we give the upper bounds of $\mathbb{E}  \left \|   G(x;f_i) -  G(y;f_i) \right \|^2$ and $\sum_{t=0}^{m-1} \mathbb{E}  \left \|  \widehat{v}^{s+1}_{t} \right \|^2 $ in  Lemma \ref{lemma0.9} and \ref{lemma1} respectively. Based on Lemma \ref{lemma0.9} and \ref{lemma1}, we  give an upper bound of $\sum_{t=0}^{m-1} \mathbb{E}  \left \| \nabla f({x}^{s+1}_t)   \right \|^2$ (Theorem \ref{theorem1}). Then, we prove  the sublinear rate of the convergence  (Theorem \ref{theorem2} and Corollary \ref{corollary1}).

\begin{lemma} \label{lemma0.9}
For the smooth function $f_i$ and the corresponding approximate full gradient $G(x;f_i) $, we have
\begin{eqnarray}\label{equ_lemma0.9_o.1}
\mathbb{E}  \left \|   G(x;f_i) -  G(y;f_i) \right \|^2 \leq  \widetilde{L}^2 \left \|x - y \right \|^2
\end{eqnarray}

\end{lemma}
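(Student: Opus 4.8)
The plan is to recognize that the full zeroth-order estimate $G(x;f_i)$ is, when every coordinate is selected (so that the normalization factor $Y$ in its definition equals $N$), algebraically identical to the mixtured gradient of the coordinated smooth function $f_i^j$, and then to invoke the Lipschitz assumption \eqref{ass4} on the mixtured gradient directly.

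First I would write $G(x;f_i)$ out coordinate-by-coordinate. With the full coordinate set the prefactor $\tfrac{N}{2Y\mu_j}$ collapses to $\tfrac{1}{2\mu_j}$, so each summand is the finite difference $\tfrac{1}{2\mu_j}\left(f_i(x+\mu_j e_j)-f_i(x-\mu_j e_j)\right)e_j$, which by the identity \eqref{equdef3} is precisely $\nabla_j f_i^j(x)$. This gives the crucial structural identity
\begin{eqnarray*}
G(x;f_i) = \sum_{j=1}^N \nabla_j f_i^j(x),
\end{eqnarray*}
i.e.\ although $G(x;f_i)$ is defined through finite differences of $f_i$, it coincides with the mixtured gradient $\sum_{j=1}^N \nabla_j f_i^j(x)$.

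Next I would apply the Lipschitz bound \eqref{ass4}, understood to hold for each component $f_i$ with the common constant $\widetilde{L}$ (consistent with the per-component smoothness imposed in \eqref{ass3}). Substituting the identity from the previous step yields
\begin{eqnarray*}
\left\| G(x;f_i) - G(y;f_i) \right\| = \left\| \sum_{j=1}^N \nabla_j f_i^j(x) - \sum_{j=1}^N \nabla_j f_i^j(y) \right\| \leq \widetilde{L} \left\| x - y \right\|.
\end{eqnarray*}
Squaring gives the deterministic bound $\widetilde{L}^2\|x-y\|^2$; since this holds pointwise for every realization, taking the expectation preserves it and produces \eqref{equ_lemma0.9_o.1}.

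The main obstacle is the bookkeeping in the first step: I must track the normalization carefully, since $G_J(\cdot;f_i)$ carries the factor $\tfrac{N}{2Y\mu_j}$ whereas $\nabla_j f_i^j$ carries $\tfrac{1}{2\mu_j}$, and these match only when the full coordinate set is used ($Y=N$); getting this exactly right is what removes any stray factor of $N$ and lands on $\widetilde{L}^2$ rather than $\widetilde{L}^2/N^2$. A secondary point worth stating explicitly is that Assumption \eqref{ass4}, though written for $f$, is being applied to each summand $f_i$; I would note that this uniformity is the natural component-wise reading of the assumption, exactly as the smoothness \eqref{ass3} is imposed on every $f_i$.
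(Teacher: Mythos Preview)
Your proposal is correct and follows essentially the same approach as the paper: identify $G(x;f_i)$ with the mixtured gradient $\sum_{j=1}^N \nabla_j f_i^j(x)$ via the finite-difference identity \eqref{equdef3}, then apply the Lipschitz bound \eqref{ass4} and square. Your explicit remark that \eqref{ass4} must be read per component $f_i$ is a helpful clarification that the paper leaves implicit.
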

\begin{proof}
Based on the definition of the approximate gradient $G(x;f_i) $, we have that
\begin{eqnarray}\label{equ_lemma0.9_1}
&& \mathbb{E}  \left \|   G(x;f_i) -  G(y;f_i) \right \|^2
 =    \mathbb{E}  \left \| \frac{1}{N} \sum_{j =1}^N \left ( G_j(x;f_i) -  G_j(y;f_i) \right ) \right \|^2
\\ & = & \nonumber    \mathbb{E}  \left \| \sum_{j=1}^N \left ( \nabla_j f_i^{j}(x) -  \nabla_j f_i^{j}(y) \right ) \right \|^2
{\leq}     \widetilde{L}^2 \left \|x - y \right \|^2 {add \ what \ is \ \widetilde{L}}
\end{eqnarray}
where the second equality uses (\ref{equdef3}), the first inequality uses (\ref{ass4}).  This completes the proof.
\end{proof}
\begin{lemma} \label{lemma1}
If $Y-{2N\widetilde{L}^2 \gamma^2 \tau^2} >0$, we have that
\begin{eqnarray}\label{equ_lemma1_o.1}
 \sum_{t=0}^{m-1} \mathbb{E}  \left \|  \widehat{v}^{s+1}_{t} \right \|^2 \leq \frac{2Y}{Y-{2N\widetilde{L}^2 \gamma^2 \tau^2}} \sum_{t=0}^{m-1} \left (    \frac{2 N \widetilde{L}^2 }{b}\left \|{x}^{s+1}_t - \widetilde{x}^s \right \|^2  +  2 \widehat{L} \mathbb{E}  \left \| \nabla f({x}^{s+1}_t) \right \|^2  \right )
\end{eqnarray}
\end{lemma}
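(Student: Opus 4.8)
The plan is to bound the second moment of the variance-reduced estimator by splitting it into a ``difference'' term and a ``mean'' term, controlling the former through Lemma~\ref{lemma0.9} and the latter through the relationship constant in (\ref{ass5}), and then to trade the inconsistency between $\widehat{x}^{s+1}_t$ and ${x}^{s+1}_t$ against the delay bound $\tau$ so that the whole sum becomes self-referential and can be solved for $\sum_{t=0}^{m-1}\mathbb{E}\|\widehat{v}^{s+1}_t\|^2$.

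First I would fix an inner iteration $t$ and apply $\|a+b\|^2\le 2\|a\|^2+2\|b\|^2$ to the unbiased splitting
\begin{eqnarray}\nonumber
\widehat{v}^{s+1}_{J(t)} = \Big(\widehat{v}^{s+1}_{J(t)} - G_{J(t)}(\widehat{x}^{s+1}_t;f)\Big) + G_{J(t)}(\widehat{x}^{s+1}_t;f),
\end{eqnarray}
where the second summand is exactly $\mathbb{E}_{\mathcal{B}(t)}\widehat{v}^{s+1}_{J(t)}$. For the first (difference) summand I would use that it is a centered average over the mini-batch $\mathcal{B}(t)$, so its conditional second moment is bounded by $\frac{1}{b}\mathbb{E}_i\|G_{J(t)}(\widehat{x}^{s+1}_t;f_i)-G_{J(t)}(\widetilde{x}^s;f_i)\|^2$; taking the expectation over the random coordinate block $J(t)$ (which supplies the factor relating $\|G_{J(t)}\|^2$ to $\|G\|^2$, hence the $N$ in the numerator) and invoking Lemma~\ref{lemma0.9} turns this into $\frac{N\widetilde{L}^2}{b}\|\widehat{x}^{s+1}_t-\widetilde{x}^s\|^2$. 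For the second (mean) summand I would pass from $G_{J(t)}(\widehat{x}^{s+1}_t;f)$ to the mixtured gradient via (\ref{equdef3}) and bound it by $\widehat{L}\|\nabla f(\widehat{x}^{s+1}_t)\|^2$ through (\ref{ass5}).

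At this stage the bound is expressed in terms of the locally read iterate $\widehat{x}^{s+1}_t$, whereas the claim is stated in terms of the ideal iterate ${x}^{s+1}_t$. I would therefore substitute the asynchronous representation (\ref{pi_test1}), namely $\widehat{x}^{s+1}_t-{x}^{s+1}_t=\gamma\sum_{t'\in K(t)}B^{s+1}_{t'}\widehat{v}^{s+1}_{J(t')}$, and again use $\|a+b\|^2\le 2\|a\|^2+2\|b\|^2$ together with the Lipschitz smoothness of $\nabla f$ to replace $\widehat{x}^{s+1}_t$ by ${x}^{s+1}_t$ in both the $\|\cdot-\widetilde{x}^s\|^2$ and the $\|\nabla f(\cdot)\|^2$ terms. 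Because each $B^{s+1}_{t'}$ is a coordinate projection we have $\|B^{s+1}_{t'}\widehat{v}^{s+1}_{J(t')}\|\le\|\widehat{v}^{s+1}_{t'}\|$, and by Cauchy--Schwarz together with $|K(t)|\le\tau$ (Assumption~\ref{ass0.1}) the resulting cross term is controlled by $\gamma^2\tau\sum_{t'\in K(t)}\|\widehat{v}^{s+1}_{t'}\|^2$.

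The decisive step is the summation over $t$. Summing the delay term and using that each inner iteration $t'$ belongs to at most $\tau$ of the index sets $K(t)$ (again by Assumption~\ref{ass0.1}), the double sum collapses to $\tau^2\sum_{t'=0}^{m-1}\|\widehat{v}^{s+1}_{t'}\|^2$, producing a term of the form $\frac{2N\widetilde{L}^2\gamma^2\tau^2}{Y}\sum_{t=0}^{m-1}\mathbb{E}\|\widehat{v}^{s+1}_t\|^2$ on the right-hand side. Moving this term to the left under the stated condition $Y-2N\widetilde{L}^2\gamma^2\tau^2>0$ and dividing yields exactly the prefactor $\frac{2Y}{Y-2N\widetilde{L}^2\gamma^2\tau^2}$. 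I expect the bookkeeping of the coordinate-sampling constants (the interplay of $N$ and $Y$ arising from $G_{J(t)}$) combined with the delay accounting to be the main obstacle: one must ensure the self-referential inequality closes with precisely the constant $\frac{2N\widetilde{L}^2\gamma^2\tau^2}{Y}$, so that the denominator $Y-2N\widetilde{L}^2\gamma^2\tau^2$ appears, rather than merely obtaining a looser recursive estimate.
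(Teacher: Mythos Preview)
Your overall plan---bound $\|\widehat{v}^{s+1}_t\|^2$, generate a self-referential delay term through (\ref{pi_test1}), sum over $t$, and solve---matches the paper's. Where you diverge is the very first split, and that divergence prevents you from landing on the stated constants.

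You split $\widehat{v}^{s+1}_{J(t)}$ into its centered part and its conditional mean $G_{J(t)}(\widehat{x}^{s+1}_t;f)$, both at the \emph{read} iterate $\widehat{x}^{s+1}_t$, and only afterwards convert $\widehat{x}\to x$ in each piece. That conversion is costly: using $\|a+b\|^2\le 2\|a\|^2+2\|b\|^2$ separately on $\|\widehat{x}^{s+1}_t-\widetilde{x}^s\|^2$ and on $\|\nabla f(\widehat{x}^{s+1}_t)\|^2$ doubles the leading coefficients, and passing from $\|\nabla f(\widehat{x}^{s+1}_t)\|^2$ to $\|\nabla f(x^{s+1}_t)\|^2$ via (\ref{ass3}) injects an extra $L^2\|\widehat{x}^{s+1}_t-x^{s+1}_t\|^2$ delay term. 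The self-referential coefficient you end up with then depends on $L$, $\widehat{L}$, and $b^{-1}$, not just $\widetilde{L}$, so the denominator after solving is not $Y-2N\widetilde{L}^2\gamma^2\tau^2$. You flagged exactly this as the obstacle; your splitting is what makes it unavoidable.

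The paper sidesteps this by introducing
\[
v^{s+1}_t \;=\; \frac{1}{b}\sum_{i\in \mathcal{B}(t)} G({x}^{s+1}_t;f_i)- \frac{1}{b}\sum_{i\in \mathcal{B}(t)} G(\widetilde{x}^s;f_i)  + G(\widetilde{x}^s;f),
\]
the same estimator evaluated at the \emph{ideal} iterate, and splitting $\widehat{v}^{s+1}_t=(\widehat{v}^{s+1}_t-v^{s+1}_t)+v^{s+1}_t$. Because the two $\widetilde{x}^s$ pieces are identical in $\widehat{v}$ and $v$, they cancel, leaving
\[
\widehat{v}^{s+1}_t-v^{s+1}_t \;=\; \frac{1}{b}\sum_{i\in\mathcal{B}(t)}\bigl(G(\widehat{x}^{s+1}_t;f_i)-G(x^{s+1}_t;f_i)\bigr),
\]
which Lemma~\ref{lemma0.9} bounds by $2\widetilde{L}^2\|\widehat{x}^{s+1}_t-x^{s+1}_t\|^2$. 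This is the \emph{only} place the asynchronous inconsistency enters, and expanding it via (\ref{pi_test1}), Cauchy--Schwarz, and Assumption~\ref{ass0.1} delivers exactly the coefficient $\tfrac{2N\widetilde{L}^2\gamma^2\tau^2}{Y}$. The remaining piece $\|v^{s+1}_t\|^2$ is already at $x^{s+1}_t$, so the centered-plus-mean split there yields the two claimed terms with no further conversion and no extra factors.

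In short: isolate the $\widehat{x}\leftrightarrow x$ inconsistency first by subtracting off the estimator at $x^{s+1}_t$, and only then perform the variance-reduction decomposition. Doing it in your order forces the delay to contaminate both the variance term and the mean term, and you cannot close the recursion with the claimed denominator.
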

\begin{proof}
Let $ {v}^{s+1}_t = \frac{1}{b}\sum_{i\in \mathcal{B}(t)} G({x}^{s+1}_t;f_i)- \frac{1}{b}\sum_{i\in \mathcal{B}(t)} G(\widetilde{x}^s;f_i)  + G(\widetilde{x}^s;f)$, we have that
\begin{eqnarray}\label{equ_lemma1_1}
 && \mathbb{E}  \left \|  \widehat{v}^{s+1}_t \right \|^2 =  \mathbb{E}  \left \|  \widehat{v}^{s+1}_t - {v}^{s+1} + {v}^{s+1}_t  \right \|^2
 \\ & \leq & \nonumber  2 \mathbb{E}  \left \|  \widehat{v}^{s+1}_t - {v}^{s+1}_t \right \|^2 + 2\mathbb{E}  \left \|  {v}^{s+1}_t  \right \|^2
 \\ & = & \nonumber  2 \mathbb{E}  \left \|  \frac{1}{b}\sum_{i \in \mathcal{B}(t)} \left ( G(\widehat{x}^{s+1}_t;f_{i}) -  G({x}^{s+1}_t;f_i) \right ) \right \|^2 + 2 \mathbb{E}  \left \|  {v}^{s+1}_t  \right \|^2
  \\ & \leq & \nonumber  \frac{2}{b} \sum_{i\in \mathcal{B}(t)}  \mathbb{E}  \left \|   G(\widehat{x}^{s+1}_t;f_i) -  G({x}^{s+1}_t;f_i) \right \|^2 + 2 \mathbb{E}  \left \|  {v}^{s+1}_t  \right \|^2
\\ & {\leq} & \nonumber  {2 \widetilde{L}^2}   \mathbb{E}  \left \| \widehat{x}^{s+1}_t -  {x}^{s+1}_t  \right \|^2  + 2 \mathbb{E}  \left \|  {v}^{s+1}_t  \right \|^2
\\ & = & \nonumber {2 \widetilde{L}^2 \gamma^2} \mathbb{E}  \left \| \sum_{t' \in K(t)}  B_{t'}^{s+1} \widehat{v}^{s+1}_{J(t')} \right \|^2  + 2 \mathbb{E}  \left \|  {v}^{s+1}_t  \right \|^2
\\ & \leq & \nonumber {2\widetilde{L}^2 \gamma^2} \tau \mathbb{E} \sum_{t' \in K(t)}   \left \| B_{t'}^{s+1} \widehat{v}^{s+1}_{J(t')} \right \|^2  + 2 \mathbb{E}  \left \|  {v}^{s+1}_t  \right \|^2
\\ & \leq & \nonumber {2\widetilde{L}^2 \gamma^2} \tau \sum_{t' \in K(t)}  \mathbb{E}  \left \|  \widehat{v}^{s+1}_{J(t')} \right \|^2  + 2 \mathbb{E}  \left \|  {v}^{s+1}_t  \right \|^2
\\ & = & \nonumber \frac{2N\widetilde{L}^2 \gamma^2 \tau}{Y}   \sum_{t' \in K(t)}  \mathbb{E} \left \|  \widehat{v}^{s+1}_{t'} \right \|^2  + 2 \mathbb{E}  \left \|  {v}^{s+1}_t  \right \|^2
\end{eqnarray}
 where the first, second and fourth inequalities use the fact that $\| \sum_{i=1}^n a_i \|^2 \leq n \sum_{i=1}^n \| a_i \|^2 $, the third inequality uses (\ref{equ_lemma0.9_o.1}), the fifth inequality uses the Cauchy-Schwarz inequality and the fact $\left \| B_{t}^{s+1} \right \| \leq 1$. We consider a fixed stage $s+1$ such that $x_0^{s+1} = x_{m}^{s}$. By summing
the the inequality (\ref{equ_lemma1_1}) over $t = 0,\cdots,m-1$,  we obtain
 \begin{eqnarray} \label{equ_lemma1_2}
  \sum_{t=0}^{m-1} \mathbb{E}  \left \|  \widehat{v}^{s+1}_{t} \right \|^2 & \leq &   \sum_{t=0}^{m-1} \left ( \frac{2N \widetilde{L}^2 \gamma^2 \tau}{Y}   \sum_{t' \in K(t)} \mathbb{E}  \left \| \widehat{v}^{s+1}_{t'} \right \|^2  + 2 \mathbb{E}  \left \|  {v}^{s+1}_t  \right \|^2 \right )
\\ & \leq & \nonumber   \frac{2N\widetilde{L}^2 \gamma^2 \tau^2 }{Y}   \sum_{t=0}^{m-1}  \mathbb{E}  \left \| \widehat{v}^{s+1}_t \right \|^2  + 2 \sum_{t=0}^{m-1} \mathbb{E}  \left \|  {v}^{s+1}_t  \right \|^2
  \end{eqnarray}
where the second inequality uses the Assumption \ref{ass0.1}. If $Y-{2N\widetilde{L}^2 \gamma^2 \tau^2} >0$, we have that
 \begin{eqnarray} \label{equ_lemma1_3}
  \sum_{t=0}^{m-1} \mathbb{E}  \left \|  \widehat{v}^{s+1}_{t} \right \|^2 & \leq &  \frac{2Y}{Y-{2N\widetilde{L}^2 \gamma^2 \tau^2}} \sum_{t=0}^{m-1} \mathbb{E}  \left \|  {v}^{s+1}_t  \right \|^2
  \end{eqnarray}
 We next bound $\mathbb{E}  \left \|  {v}^{s+1}_t  \right \|^2$ by
 \begin{eqnarray} \label{equ_lemma1_4}
&& \mathbb{E}  \left \|  {v}^{s+1}_t  \right \|^2
\\ & = & \nonumber  \mathbb{E}  \left \|  \frac{1}{b}\sum_{i\in \mathcal{B}(t)} G({x}^{s+1}_t;f_i)- \frac{1}{b}\sum_{i\in \mathcal{B}(t)} G(\widetilde{x}^s;f_i)  + G(\widetilde{x}^s;f)  \right \|^2
\\ & = & \nonumber  \mathbb{E}  \left \|  \frac{1}{b}\sum_{i\in \mathcal{B}(t)} G({x}^{s+1}_t;f_i)- \frac{1}{b}\sum_{i\in \mathcal{B}(t)} G(\widetilde{x}^s;f_i)  + G({x}^s;f) - G({x}^{s+1}_t;f) + G(\widetilde{x}^{s+1}_t;f) \right \|^2
\\ & \leq & \nonumber 2 \mathbb{E}  \left \|  \frac{1}{b}\sum_{i\in \mathcal{B}(t)} G({x}^{s+1}_t;f_i)- \frac{1}{b}\sum_{i\in \mathcal{B}(t)} G(\widetilde{x}^s;f_i)  - \left ( G({x}^{s+1}_t;f) -G(\widetilde{x}^s;f) \right ) \right \|^2 + 2 \mathbb{E}  \left \|   G({x}^{s+1}_t;f) \right \|^2
\\ & = & \nonumber \frac{2}{b^2} \mathbb{E}  \left \|  \sum_{i\in \mathcal{B}(t)} \left ( G({x}^{s+1}_t;f_i)-  G(\widetilde{x}^s;f_i)  - \left ( G({x}^{s+1}_t;f) -G(\widetilde{x}^s;f) \right ) \right )  \right \|^2 + 2 \mathbb{E}  \left \|  \frac{1}{N} \sum_{j=1}^N  G_j({x}^{s+1}_t;f) \right \|^2
\\ & \leq & \nonumber \frac{2}{b} \mathbb{E}  \left \|  G({x}^{s+1}_t;f_i)-  G(\widetilde{x}^s;f_i)  -  G({x}^{s+1}_t;f) -G(\widetilde{x}^s;f)  \right \|^2 + 2 \mathbb{E}  \left \|  \frac{1}{N} \sum_{j=1}^N  G_j({x}^{s+1}_t;f) \right \|^2
\\ & \leq & \nonumber \frac{2}{b} \mathbb{E}  \left \|  G({x}^{s+1}_t;f_i)-  G(\widetilde{x}^s;f_i)  \right \|^2 +  2 \mathbb{E}  \left \| \sum_{j=1}^N  \nabla_jf^j({x}^{s+1}_t) \right \|^2
\\ & {\leq} & \nonumber  \frac{2  \widetilde{L}^2 }{b}\left \|{x}^{s+1}_t - \widetilde{x}^s \right \|^2  + 2 \widehat{L} \mathbb{E}  \left \| \nabla f({x}^{s+1}_t) \right \|^2
  \end{eqnarray}
where the first inequality uses $\| \sum_{i=1}^n a_i \|^2 \leq n \sum_{i=1}^n \| a_i \|^2 $,  The second inequality uses Lemma 7 in \citep{reddi2016fast}, the third inequality uses $\mathbb{E} \|x- \mathbb{E}x\|^2 \leq \mathbb{E} \|x\|^2$, the fourth inequality uses (\ref{equ_lemma0.9_o.1}) and (\ref{ass5}).
 This completes the proof.
\end{proof}

\begin{theorem} \label{theorem1}
Setting $c_m=0$,  $\beta_t>0$. Let \begin{eqnarray}
c_t&=&c_{t+1}    (1+ \gamma \beta_t) + \left ( \frac{c_{t+1}N\gamma^2}{Y} +   \frac{L Y \gamma^2 }{2N} + \frac{\gamma^3 NL^2 \tau^2}{Y}  \right )  \frac{4Y N \widetilde{L}^2}{b(Y-{2N\widetilde{L}^2 \gamma^2 \tau^2})}
\\ \Gamma_t &=& \frac{\gamma}{2} - \left ( \frac{c_{t+1}N\gamma^2}{Y} +   \frac{L Y \gamma^2 }{2N} + \frac{\gamma^3 NL^2 \tau^2}{Y}  \right )  \frac{4 Y  \widehat{L}}{Y-{2N\widetilde{L}^2 \gamma^2 \tau^2}} \end{eqnarray}
 Let $\eta_t$, $\beta_t$ and $c_{t+1}$ be chosen such that $\Gamma_t>0$ and ${  \beta_t\geq 2 c_{t+1} }$. $\sum_{t=0}^{m-1} \mathbb{E}  \left \| \nabla f({x}^{s+1}_t)   \right \|^2$ in AsyDSZOVR satisfy the bound
\begin{eqnarray}\label{equ_lemma3_o.1}
\sum_{t=0}^{m-1} \mathbb{E}  \left \| \nabla f({x}^{s+1}_t)   \right \|^2 \leq \frac{  \mathbb{E} ( f(x^{s}) ) -  \mathbb{E} ( f(x^{s+1})) +  \frac{\gamma N \omega m}{4}  }{\min_{t\in \{0,\cdots,m-1\} }\Gamma_t}
\end{eqnarray}
\end{theorem}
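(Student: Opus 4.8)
The plan is to run the nonconvex SVRG potential-function argument (in the style of \citet{reddi2016fast}) adapted to the doubly stochastic, zeroth-order, asynchronous setting. I would fix an outer stage $s+1$ with $x_0^{s+1} = \widetilde{x}^s = x^s$ and introduce the Lyapunov function $R_t^{s+1} = \mathbb{E}( f(x_t^{s+1}) + c_t \| x_t^{s+1} - \widetilde{x}^s \|^2 )$, aiming for a one-step inequality of the form $R_{t+1}^{s+1} \leq R_t^{s+1} - \Gamma_t \mathbb{E} \| \nabla f(x_t^{s+1}) \|^2 + \frac{\gamma N \omega}{4}$. Summing over $t = 0,\dots,m-1$ telescopes: since $c_m=0$ we get $R_m^{s+1} = \mathbb{E} f(x^{s+1})$, and since $x_0^{s+1} = \widetilde{x}^s$ we get $R_0^{s+1} = \mathbb{E} f(x^{s})$, so the accumulated bias is exactly $\frac{\gamma N \omega m}{4}$ and dividing by $\min_t \Gamma_t$ yields (\ref{equ_lemma3_o.1}).

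First I would produce the one-step descent inequality. Applying the $L$-smooth descent form (\ref{coordinate_lipschitz_constant2}) to the update $x_{t+1}^{s+1} - x_t^{s+1} = -\gamma \widehat{v}^{s+1}_{J(t)}$ gives $\mathbb{E} f(x_{t+1}^{s+1}) \leq \mathbb{E} f(x_t^{s+1}) - \gamma \mathbb{E} \langle \nabla f(x_t^{s+1}), \widehat{v}^{s+1}_{J(t)} \rangle + \frac{L\gamma^2}{2} \mathbb{E} \| \widehat{v}^{s+1}_{J(t)} \|^2$. Taking the conditional expectation over $\mathcal{B}(t)$ and the coordinate block $J(t)$ turns the linear term into an inner product with the mixture gradient $\sum_j \nabla_j f^j$ evaluated at the \emph{stale} read $\widehat{x}^{s+1}_t$. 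I would then insert $\pm \sum_j \nabla_j f^j(x_t^{s+1})$ and $\pm \nabla f(x_t^{s+1})$ and expand via $\langle a,b\rangle = \tfrac12(\|a\|^2+\|b\|^2-\|a-b\|^2)$. This produces (i) a clean $-\tfrac{\gamma}{2}\|\nabla f(x_t^{s+1})\|^2$ term, (ii) the zeroth-order smoothing bias, bounded by $\frac{\gamma N \omega}{4}$ via (\ref{equdef3.1}), (iii) a gradient-mismatch term converted into $\|\nabla f\|^2$ by (\ref{ass5}), which is the source of $\widehat{L}$, and (iv) a delay term $\|\widehat{x}^{s+1}_t - x_t^{s+1}\|^2$.

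Next I would dispose of the coupling terms. The quadratic increment I would bound by Young's inequality with parameter $\beta_t$, giving $\|x_{t+1}^{s+1} - \widetilde{x}^s\|^2 \leq (1+\gamma\beta_t)\|x_t^{s+1} - \widetilde{x}^s\|^2 + (1+\tfrac{1}{\gamma\beta_t})\gamma^2 \|\widehat{v}^{s+1}_{J(t)}\|^2$, which feeds the $(1+\gamma\beta_t)$ factor in the recursion for $c_t$ and forces the hypothesis $\beta_t \geq 2c_{t+1}$. The stale-read term (iv) I would control with the asynchronous representation (\ref{pi_test1}): writing $\widehat{x}^{s+1}_t - x_t^{s+1} = \gamma \sum_{t'\in K(t)} B_{t'}^{s+1} \widehat{v}^{s+1}_{J(t')}$, applying Cauchy--Schwarz with $\|B_{t'}^{s+1}\|\leq 1$ and Assumption \ref{ass0.1}, and invoking the smoothness assumptions, converts it into $\gamma^2\tau \sum_{t'\in K(t)} \|\widehat{v}^{s+1}_{J(t')}\|^2$; this is the origin of the $\frac{\gamma^3 N L^2 \tau^2}{Y}$ coefficient. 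At this point every surviving quadratic is a multiple of $\mathbb{E}\|\widehat{v}^{s+1}_{J(t')}\|^2$, which I convert to $\mathbb{E}\|\widehat{v}^{s+1}_{t'}\|^2$ by the coordinate-sampling identity, sum over $t$, and discharge with Lemma \ref{lemma1}. The factor $\frac{2Y}{Y-2N\widetilde{L}^2\gamma^2\tau^2}$ together with the two terms of Lemma \ref{lemma1} (coefficients $\frac{2N\widetilde{L}^2}{b}$ and $2\widehat{L}$) assemble precisely the $\frac{4YN\widetilde{L}^2}{b(Y-2N\widetilde{L}^2\gamma^2\tau^2)}$ multiplier in $c_t$ and the $\frac{4Y\widehat{L}}{Y-2N\widetilde{L}^2\gamma^2\tau^2}$ multiplier in $\Gamma_t$. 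Collecting the coefficient of $\|x_t^{s+1}-\widetilde{x}^s\|^2$ defines the recursion for $c_t$, and collecting that of $\|\nabla f(x_t^{s+1})\|^2$ defines $\Gamma_t$; the hypotheses $\Gamma_t>0$ and $Y-2N\widetilde{L}^2\gamma^2\tau^2>0$ are exactly what make the contraction nontrivial.

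The main obstacle I anticipate is the delay bookkeeping: ensuring that the stale-read term (iv) and the application of Lemma \ref{lemma1} do not feed back uncontrollably onto the very quantity $\sum_t \mathbb{E}\|\widehat{v}^{s+1}_t\|^2$ being bounded, and that the three distinct sources of $\|\widehat{v}\|^2$ terms (the descent step, the increment term through $c_{t+1}$, and the delay term) are aggregated into a single $\sum_t \mathbb{E}\|\widehat{v}^{s+1}_t\|^2$ so that the coefficients $\frac{c_{t+1}N\gamma^2}{Y}$, $\frac{LY\gamma^2}{2N}$, and $\frac{\gamma^3 N L^2 \tau^2}{Y}$ land exactly as stated. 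The remaining pieces --- the inner-product splitting, the bias bound, and the telescoping --- are routine.
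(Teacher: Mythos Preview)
Your plan is essentially the paper's own proof: the same Lyapunov function $R_t^{s+1}$, the $L$-smooth descent step, the polarization identity to isolate $-\tfrac{\gamma}{2}\|\nabla f(x_t^{s+1})\|^2$, the delay control via (\ref{pi_test1}) with Assumption~\ref{ass0.1}, discharge of all $\|\widehat v\|^2$ terms through Lemma~\ref{lemma1}, and telescoping with $c_m=0$. One small correction on the mechanism: in the paper the Young step on $\|x_{t+1}^{s+1}-\widetilde{x}^s\|^2$ is applied \emph{after} taking expectation over $\mathcal B(t)$, so the cross term produces $\tfrac{c_{t+1}\gamma N}{\beta_t}\,\mathbb{E}\|\nabla_j f^j(\widehat{x}_t^{s+1})\|^2$ rather than another $\|\widehat v\|^2$; this positive term is then cancelled against the $-\tfrac{\gamma N}{2}\,\mathbb{E}\|\nabla_j f^j(\widehat{x}_t^{s+1})\|^2$ coming from the polarization identity, and \emph{that} cancellation is where the hypothesis $\beta_t\geq 2c_{t+1}$ is actually used (so the exact coefficients in $c_t,\Gamma_t$ come out as stated).
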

\begin{proof}
We first bound $\mathbb{E} \left \| x_{t+1}^{s+1} - \widetilde{x}^{s}  \right \|^2$.
\begin{eqnarray}\label{equ_lemma3_o.1}
 && \mathbb{E} \left \| x_{t+1}^{s+1} - \widetilde{x}^{s}  \right \|^2 = \mathbb{E} \left \| x_{t+1}^{s+1} -  x_{t}^{s+1} + x_{t}^{s+1} -  \widetilde{x}^{s}  \right \|^2
 \\ & = & \nonumber \mathbb{E} \left ( \left \| x_{t+1}^{s+1} -  x_{t}^{s+1} \right \|^2  + \left \|  x_{t}^{s+1} - \widetilde{x}^{s}  \right \|^2 + 2 \left \langle  x_{t+1}^{s+1} -  x_{t}^{s+1}, x_{t}^{s+1}  - \widetilde{x}^{s} \right \rangle \right )
 \\ & = & \nonumber \mathbb{E} \left ( \gamma^2 \left \|  \widehat{v}^{s+1}_{J(t)}  \right \|^2  + \left \|  x_{t}^{s+1} - \widetilde{x}^{s}  \right \|^2 - 2 \gamma \left \langle  \widehat{v}^{s+1}_{J(t)} , x_{t}^{s+1}  - \widetilde{x}^{s} \right \rangle \right )
  \\ & = & \nonumber \frac{N\gamma^2}{Y} \mathbb{E}  \left \|  \widehat{v}^{s+1}_{t}  \right \|^2  + \mathbb{E} \left \|  x_{t}^{s+1} - \widetilde{x}^{s}  \right \|^2 - 2 \gamma \mathbb{E}  \left \langle    \frac{1}{b}\sum_{i \in \mathcal{B}(t)}  G(\widehat{x}^{s+1}_t;f_{i}) , x_{t}^{s+1}  - \widetilde{x}^{s} \right \rangle
\\ & \leq & \nonumber \frac{N\gamma^2}{Y} \mathbb{E}  \left \|  \widehat{v}^{s+1}_{t}  \right \|^2   + \mathbb{E} \left \|  x_{t}^{s+1} - \widetilde{x}^{s}  \right \|^2 + 2 \gamma \mathbb{E}  \left (  \frac{1}{2\beta_t } \left \| \frac{1}{b}\sum_{i \in \mathcal{B}(t)}G(\widehat{x}^{s+1}_t;f_{i}) \right \|^2 + \frac{\beta_t}{2}\left \|  x_{t}^{s+1}  - \widetilde{x}^{s}  \right \|^2  \right )
\\ & = & \nonumber \frac{N\gamma^2}{Y} \mathbb{E}  \left \|  \widehat{v}^{s+1}_{t}  \right \|^2   + (1+ \gamma \beta_t)\mathbb{E} \left \|  x_{t}^{s+1} - \widetilde{x}^{s}  \right \|^2 + 2 \gamma \mathbb{E}  \left (  \frac{1}{2\beta_t } \left \| \frac{1}{b}\sum_{i \in \mathcal{B}(t)} \sum_{j=1}^N  \nabla_jf^j({x}^{s+1}_t) \right \|^2  \right )
\\ & \leq & \nonumber \frac{N\gamma^2}{Y} \mathbb{E}  \left \|  \widehat{v}^{s+1}_{t}  \right \|^2   + (1+ \gamma \beta_t)\mathbb{E} \left \|  x_{t}^{s+1} - \widetilde{x}^{s}  \right \|^2 +  \frac{ \gamma}{b\beta_t} \mathbb{E}   \left ( \sum_{i \in \mathcal{B}(t)} \left \|  \sum_{j=1}^N  \nabla_jf^j({x}^{s+1}_t) \right \|^2  \right )
\\ & = & \nonumber \frac{N\gamma^2}{Y} \mathbb{E}  \left \|  \widehat{v}^{s+1}_{t}  \right \|^2   + (1+ \gamma \beta_t)\mathbb{E} \left \|  x_{t}^{s+1} - \widetilde{x}^{s}  \right \|^2 + \frac{\gamma N}{\beta_t }   \mathbb{E}  \left \| \nabla_jf^j({x}^{s+1}_t) \right \|^2
\end{eqnarray}
where the first inequality uses the Young's inequality,  the second inequality uses  the fact that $\| \sum_{i=1}^n a_i \|^2 \leq n \sum_{i=1}^n \| a_i \|^2 $. We next bound $\mathbb{E} \left \| \nabla_j f(x^{s+1}_{t}) -   \nabla_jf^j(\widehat{x}^{s+1}_t)　\right \|^2$.
\begin{eqnarray}\label{equ_lemma3_o.2}
&&\mathbb{E} \left \| \nabla_j f(x^{s+1}_{t}) -   \nabla_jf^j(\widehat{x}^{s+1}_t)　\right \|^2
\\ & = & \nonumber \mathbb{E} \left \| \nabla_j f(x^{s+1}_{t}) -   \nabla_jf(\widehat{x}^{s+1}_t) +  \nabla_jf(\widehat{x}^{s+1}_t) - \nabla_jf^j(\widehat{x}^{s+1}_t)　\right \|^2
\\ & \leq & \nonumber 2 \mathbb{E} \left \| \nabla_j f(x^{s+1}_{t}) -   \nabla_jf(\widehat{x}^{s+1}_t) \right \|^2 + 2 \mathbb{E} \left \| \nabla_jf(\widehat{x}^{s+1}_t) - \nabla_jf^j(\widehat{x}^{s+1}_t)　\right \|^2
\\ & \leq & \nonumber \frac{2}{N} \mathbb{E} \left \| \nabla f(x^{s+1}_{t}) -   \nabla f(\widehat{x}^{s+1}_t) \right \|^2 + \frac{\omega}{2}
\\ & \leq & \nonumber \frac{2L^2}{N}  \left \| x^{s+1}_{t} - \widehat{x}^{s+1}_t  \right \|^2 + \frac{\omega}{2}
\\ & = & \nonumber \frac{2L^2 \gamma^2 }{N}  \left \|  \sum_{t' \in K(t)}  B_{t'}^{s+1} \widehat{v}^{s+1}_{J(t')}  \right \|^2 + \frac{\omega}{2}
\\ & \leq & \nonumber \frac{2L^2  \gamma^2  \tau}{N} \mathbb{E} \sum_{t' \in K(t)}   \left \| B_{t'}^{s+1} \widehat{v}^{s+1}_{J(t')} \right \|^2  + \frac{\omega}{2}
\\ & \leq & \nonumber \frac{2L^2  \gamma^2  \tau}{N} \mathbb{E} \sum_{t' \in K(t)}   \left \|  \widehat{v}^{s+1}_{J(t')} \right \|^2  + \frac{\omega}{2}
\\ & = & \nonumber \frac{2L^2  \gamma^2  \tau}{Y}  \sum_{t' \in K(t)}  \mathbb{E} \left \|  \widehat{v}^{s+1}_{t'} \right \|^2  + \frac{\omega}{2}
\end{eqnarray}
where the first and fourth inequalities use $\| \sum_{i=1}^n a_i \|^2 \leq n \sum_{i=1}^n \| a_i \|^2 $, the second inequality uses (\ref{equdef3.1}), the third inequality uses (\ref{ass3}), the fifth inequality uses the  Cauchy-Schwarz inequality and the fact $\left \| B_{t}^{s+1} \right \| \leq 1$. We bound $\mathbb{E}\left (f(x^{s+1}_{t+1}) \right )$ as follows.
\begin{eqnarray}\label{equ_lemma3_o.3}
&& \mathbb{E}\left (f(x^{s+1}_{t+1}) \right )
\\ & \leq & \nonumber  \mathbb{E}\left (f(x^{s+1}_{t}) + \left \langle \nabla f(x^{s+1}_{t}), x^{s+1}_{t+1} - x^{s+1}_{t} \right \rangle + \frac{L}{2}\left \| x^{s+1}_{t+1} - x^{s+1}_{t} \right \|^2    \right )
\\ & = & \nonumber  \mathbb{E}\left (f(x^{s+1}_{t}) - \gamma \left \langle \nabla f(x^{s+1}_{t}),  \widehat{v}^{s+1}_{J(t)} \right \rangle  + \frac{L \gamma^2 }{2}\left \| \widehat{v}^{s+1}_{J(t)} \right \|^2   \right )
\\ & = & \nonumber  \mathbb{E} f(x^{s+1}_{t}) - \gamma \mathbb{E} \left \langle \nabla f(x^{s+1}_{t}),  \frac{1}{b}\sum_{i \in \mathcal{B}(t)}  G(\widehat{x}^{s+1}_t;f_{i}) \right \rangle  + \frac{L Y \gamma^2 }{2N} \mathbb{E} \left \| \widehat{v}^{s+1}_{t} \right \|^2
\\ & = & \nonumber  \mathbb{E} f(x^{s+1}_{t}) - \gamma \mathbb{E} \left \langle \nabla f(x^{s+1}_{t}),  \frac{1}{N} \sum_{j=1}^N  G_j(\widehat{x}^{s+1}_t;f) \right \rangle  + \frac{L Y \gamma^2 }{2N} \mathbb{E} \left \| \widehat{v}^{s+1}_{t} \right \|^2
\\ & = & \nonumber  \mathbb{E} f(x^{s+1}_{t}) - \gamma \mathbb{E} \left \langle \nabla f(x^{s+1}_{t}),  \sum_{j=1}^N  \nabla_jf^j(\widehat{x}^{s+1}_t) \right \rangle  + \frac{L Y \gamma^2 }{2N} \mathbb{E} \left \| \widehat{v}^{s+1}_{t} \right \|^2
\\ & = & \nonumber  \mathbb{E} f(x^{s+1}_{t})  + \frac{L Y \gamma^2 }{2N} \mathbb{E} \left \| \widehat{v}^{s+1}_{t} \right \|^2
\\ &  & \nonumber - \frac{\gamma}{2} \left ( \mathbb{E} \left \|  \nabla f(x^{s+1}_{t}) \right \|^2 +  \mathbb{E} \left \| \sum_{j=1}^N  \nabla_jf^j(\widehat{x}^{s+1}_t) \right  \|^2 - \mathbb{E}　\left \| \nabla f(x^{s+1}_{t}) -  \sum_{j=1}^N  \nabla_jf^j(\widehat{x}^{s+1}_t)  \right \|^2  \right )
\\ & = & \nonumber  \mathbb{E} f(x^{s+1}_{t}) +  \frac{L Y \gamma^2 }{2N} \mathbb{E} \left \| \widehat{v}^{s+1}_{t} \right \|^2 - \frac{\gamma}{2} \mathbb{E} \left \|  \nabla f(x^{s+1}_{t}) \right \|^2   -  \frac{\gamma N}{2} \mathbb{E} \left \|  \nabla_jf^j(\widehat{x}^{s+1}_t) \right  \|^2
\\ &  & \nonumber +　\frac{\gamma N}{2} \mathbb{E} \left \| \nabla_j f(x^{s+1}_{t}) -   \nabla_jf^j(\widehat{x}^{s+1}_t)　\right \|^2  　
\\ & \leq & \nonumber  \mathbb{E} f(x^{s+1}_{t}) +  \frac{L Y \gamma^2 }{2N} \mathbb{E} \left \| \widehat{v}^{s+1}_{t} \right \|^2 - \frac{\gamma}{2} \mathbb{E} \left \|  \nabla f(x^{s+1}_{t}) \right \|^2   -  \frac{\gamma N}{2} \mathbb{E} \left \|  \nabla_jf^j(\widehat{x}^{s+1}_t) \right  \|^2
\\ &  & \nonumber +　\frac{\gamma^3 NL^2 \tau}{Y}   \sum_{t' \in K(t)}  \mathbb{E} \left \|  \widehat{v}^{s+1}_{t'} \right \|^2  + \frac{\gamma N \omega}{4}
\end{eqnarray}
where the first inequality uses (\ref{coordinate_lipschitz_constant2}), the second inequality uses (\ref{equ_lemma3_o.2}). Next, we define Lyapunov function $R^{s+1}_{t} = \mathbb{E}  \left ( f({x}^{s+1}_t)+ c_t  \left \| x_{t}^{s+1} - \widetilde{x}^{s}  \right \|^2 \right ) $, and give the upper bound of $R^{s+1}_{t+1}$ as follows.
\begin{eqnarray}\label{equ_lemma3_o.4}
&& R^{s+1}_{t+1}
\\ & = & \nonumber  \mathbb{E}  \left ( f({x}^{s+1}_{t+1})+ c_{t+1}  \left \| x_{t+1}^{s+1} - \widetilde{x}^{s}  \right \|^2 \right )
\\ & \leq & \nonumber   \mathbb{E} f(x^{s+1}_{t}) +  \frac{L Y \gamma^2 }{2N} \mathbb{E} \left \| \widehat{v}^{s+1}_{t} \right \|^2 - \frac{\gamma}{2} \mathbb{E} \left \|  \nabla f(x^{s+1}_{t}) \right \|^2   -  \frac{\gamma N}{2} \mathbb{E} \left \|  \nabla_jf^j(\widehat{x}^{s+1}_t) \right  \|^2
\\ &  & \nonumber +　\frac{\gamma^3 NL^2 \tau}{Y}   \sum_{t' \in K(t)}  \mathbb{E} \left \|  \widehat{v}^{s+1}_{t'} \right \|^2  + \frac{\gamma N \omega}{4}
\\ &  & \nonumber + c_{t+1}  \left ( \frac{N\gamma^2}{Y} \mathbb{E}  \left \|  \widehat{v}^{s+1}_{t}  \right \|^2   + (1+ \gamma \beta_t)\mathbb{E} \left \|  x_{t}^{s+1} - \widetilde{x}^{s}  \right \|^2 + \frac{\gamma N}{\beta_t }   \mathbb{E}  \left \| \nabla_jf^j({x}^{s+1}_t) \right \|^2 \right )
\\ & = & \nonumber   \mathbb{E} f(x^{s+1}_{t}) +  \frac{L Y \gamma^2 }{2N} \mathbb{E} \left \| \widehat{v}^{s+1}_{t} \right \|^2 - \frac{\gamma}{2} \mathbb{E} \left \|  \nabla f(x^{s+1}_{t}) \right \|^2   -  \left (　\frac{\gamma N}{2} - \frac{  c_{t+1} \gamma N}{\beta_t }　\right ) \mathbb{E} \left \|  \nabla_jf^j(\widehat{x}^{s+1}_t) \right  \|^2
\\ &  & \nonumber +　\frac{\gamma^3 NL^2 \tau}{Y}   \sum_{t' \in K(t)}  \mathbb{E} \left \|  \widehat{v}^{s+1}_{t'} \right \|^2 + \frac{c_{t+1}N\gamma^2}{Y} \mathbb{E}  \left \|  \widehat{v}^{s+1}_{t}  \right \|^2  + c_{t+1}    (1+ \gamma \beta_t) \mathbb{E} \left \|  x_{t}^{s+1} - \widetilde{x}^{s}  \right \|^2 + \frac{\gamma N \omega}{4}
\\ & {\leq } & \nonumber   \mathbb{E} f(x^{s+1}_{t}) +  \frac{L Y \gamma^2 }{2N} \mathbb{E} \left \| \widehat{v}^{s+1}_{t} \right \|^2 - \frac{\gamma}{2} \mathbb{E} \left \|  \nabla f(x^{s+1}_{t}) \right \|^2  +　\frac{\gamma^3 NL^2 \tau}{Y}   \sum_{t' \in K(t)}  \mathbb{E} \left \|  \widehat{v}^{s+1}_{t'} \right \|^2
\\ &  & \nonumber + \frac{c_{t+1}N\gamma^2}{Y} \mathbb{E}  \left \|  \widehat{v}^{s+1}_{t}  \right \|^2  + c_{t+1}    (1+ \gamma \beta_t) \mathbb{E} \left \|  x_{t}^{s+1} - \widetilde{x}^{s}  \right \|^2 + \frac{\gamma N \omega}{4}
\end{eqnarray}
where the  first inequality uses (\ref{equ_lemma3_o.1}) and (\ref{equ_lemma3_o.3}), and the second inequality uses the constraint ${  \beta_t\geq 2 c_{t+1} }$. We consider a fixed stage $s+1$ such that $x_0^{s+1} = x_{m}^{s}$. By summing
the the inequality (\ref{equ_lemma3_o.4}) over $t = 0,\cdots,m-1$,  we obtain
\begin{eqnarray} \label{equ_lemma3_o.5}
&&  \sum_{t=0}^{m-1} R^{s+1}_{t+1}
\\ & {\leq } & \nonumber \sum_{t=0}^{m-1} \left (  \mathbb{E} f(x^{s+1}_{t}) +  \frac{L Y \gamma^2 }{2N} \mathbb{E} \left \| \widehat{v}^{s+1}_{t} \right \|^2 - \frac{\gamma}{2} \mathbb{E} \left \|  \nabla f(x^{s+1}_{t}) \right \|^2  +　\frac{\gamma^3 NL^2 \tau}{Y}   \sum_{t' \in K(t)}  \mathbb{E} \left \|  \widehat{v}^{s+1}_{t'} \right \|^2 \right .
\\ &  & \nonumber \left . + \frac{c_{t+1}N\gamma^2}{Y} \mathbb{E}  \left \|  \widehat{v}^{s+1}_{t}  \right \|^2  + c_{t+1}    (1+ \gamma \beta_t) \mathbb{E} \left \|  x_{t}^{s+1} - \widetilde{x}^{s}  \right \|^2 + \frac{\gamma N \omega}{4} \right )
\\ & = & \nonumber \sum_{t=0}^{m-1} \left (  \mathbb{E} f(x^{s+1}_{t})  - \frac{\gamma}{2} \mathbb{E} \left \|  \nabla f(x^{s+1}_{t}) \right \|^2 + c_{t+1}    (1+ \gamma \beta_t) \mathbb{E} \left \|  x_{t}^{s+1} - \widetilde{x}^{s}  \right \|^2 + \frac{\gamma N \omega}{4}  \right .
\\ &  & \nonumber  \left . + \left ( \frac{c_{t+1}N\gamma^2}{Y} +   \frac{L Y \gamma^2 }{2N} + \frac{\gamma^3 NL^2 \tau^2}{Y}  \right ) \mathbb{E}  \left \|  \widehat{v}^{s+1}_{t}  \right \|^2  \right )
\\ & {\leq } & \nonumber \sum_{t=0}^{m-1} \left (  \mathbb{E} f(x^{s+1}_{t})    + \frac{\gamma N \omega}{4}   \right .
\\ &  & \nonumber  \left . -  \left (\frac{\gamma}{2} - \left ( \frac{c_{t+1}N\gamma^2}{Y} +   \frac{L Y \gamma^2 }{2N} + \frac{\gamma^3 NL^2 \tau^2}{Y}  \right )  \frac{4 Y  \widehat{L}}{Y-{2N\widetilde{L}^2 \gamma^2 \tau^2}}  \right ) \mathbb{E} \left \|  \nabla f(x^{s+1}_{t}) \right \|^2 +
\right .
\\ &  & \nonumber  \left .  \left ( c_{t+1}    (1+ \gamma \beta_t) + \left ( \frac{c_{t+1}N\gamma^2}{Y} +   \frac{L Y \gamma^2 }{2N} + \frac{\gamma^3 NL^2 \tau^2}{Y}  \right )  \frac{4Y N \widetilde{L}^2}{b(Y-{2N\widetilde{L}^2 \gamma^2 \tau^2})}  \right ) \mathbb{E} \left \|  x_{t}^{s+1} - \widetilde{x}^{s}  \right \|^2
\right)
\\ & {= } & \nonumber \sum_{t=0}^{m-1} \left (  R^{s+1}_{t} - \Gamma_t \mathbb{E} \left \|  \nabla f(x^{s+1}_{t}) \right \|^2 +  \frac{\gamma N \omega}{4}   \right )
\end{eqnarray}
where the second inequality uses (\ref{equ_lemma1_o.1}).
Because $c_m=0$, we have that $R^{s+1}_{m}=\mathbb{E} (f(x^{s+1}_{m}) ) =\mathbb{E}( f(x^{s+1}) ) $. In addition, we have that $R^{s+1}_{0}=\mathbb{E} (f(x^{s+1}_{0}) ) =\mathbb{E}( f(x^{s}) ) $.
Based on (\ref{equ_lemma3_o.5}), we have that
\begin{eqnarray}\label{equ_lemma3_o.6}
\sum_{t=0}^{m-1} \mathbb{E}  \left \| \nabla f({x}^{s+1}_t)   \right \|^2 &\leq& \frac{ \sum_{t=0}^{m-1} (R^{s+1}_{t} - R^{s+1}_{t+1}) +  \frac{\gamma N \omega m}{4}  }{\min_{t\in \{0,\cdots,m-1\} }\Gamma_t}
\\ & {= } & \nonumber \frac{ (R^{s+1}_{0} - R^{s+1}_{m}) +  \frac{\gamma N \omega m}{4}  }{\min_{t\in \{0,\cdots,m-1\} }\Gamma_t}
\\ & {= } & \nonumber \frac{  \mathbb{E} ( f(x^{s})) -  \mathbb{E} ( f(x^{s+1})) +  \frac{\gamma N \omega m}{4}  }{\min_{t\in \{0,\cdots,m-1\} }\Gamma_t}
\end{eqnarray}
 This completes the proof.
\end{proof}

\begin{theorem} \label{theorem2}
Let $c_{m} = 0$,
 $\gamma = \frac{u_0b}{\widetilde{L} l^{\alpha}}$, $\beta_t= \frac{\widetilde{L} N^2}{Y }$, $0<\alpha<1$, $0<u_0<1$, and $c_t=c_{t+1}    (1+ \gamma \beta_t) + \left ( \frac{c_{t+1}N\gamma^2}{Y} +   \frac{L Y \gamma^2 }{2N} + \frac{\gamma^3 NL^2 \tau^2}{Y}  \right )  \frac{4Y N \widetilde{L}^2}{b(Y-{2N\widetilde{L}^2 \gamma^2 \tau^2})}$ for $t=0,\cdots,m-1$, $b<l^{\alpha}$. $\frac{1}{T}\sum_{s=0}^{S-1}\sum_{t=0}^{m-1} \mathbb{E}  \left \| \nabla f({x}^{s+1}_t)   \right \|^2$ in AsyDSZOVR satisfy the bound
\begin{eqnarray}\label{equ_theorem2_o.1}
\frac{1}{T}\sum_{s=0}^{S-1}\sum_{t=0}^{m-1} \mathbb{E}  \left \| \nabla f({x}^{s+1}_t)   \right \|^2 \leq \frac{ \widetilde{L} l^{\alpha} \left (  f(x^{0})) -  \mathbb{E} ( f(x^{S})) \right )}{\sigma b T} +  \frac{ N  u_0 \omega}{4\sigma  }
\end{eqnarray}
\end{theorem}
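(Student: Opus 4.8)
The plan is to average the per-epoch bound of Theorem~\ref{theorem1} over the $S$ outer loops and then substitute the prescribed constants. First I would apply Theorem~\ref{theorem1} to each stage $s+1$, giving
\[
\sum_{t=0}^{m-1}\mathbb{E}\left\|\nabla f(x^{s+1}_t)\right\|^2 \leq \frac{\mathbb{E}(f(x^s))-\mathbb{E}(f(x^{s+1}))+\frac{\gamma N\omega m}{4}}{\min_t\Gamma_t},
\]
and then sum over $s=0,\ldots,S-1$. The function-value gaps telescope --- since $x^{s+1}_0=x^s$ and $x^{s+1}_m=x^{s+1}$ make $R^{s+1}_0=\mathbb{E}f(x^s)$ and $R^{s+1}_m=\mathbb{E}f(x^{s+1})$ --- leaving $f(x^0)-\mathbb{E}(f(x^S))$ in the numerator, while the constant residuals accumulate to $\frac{\gamma N\omega mS}{4}$. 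Dividing by $T=mS$ yields
\[
\frac{1}{T}\sum_{s=0}^{S-1}\sum_{t=0}^{m-1}\mathbb{E}\left\|\nabla f(x^{s+1}_t)\right\|^2 \leq \frac{f(x^0)-\mathbb{E}(f(x^S))}{T\min_t\Gamma_t}+\frac{\gamma N\omega}{4\min_t\Gamma_t}.
\]

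The whole theorem then reduces to one quantitative claim: a uniform lower bound of the form $\min_t\Gamma_t \geq \frac{\gamma\sigma}{u_0}=\frac{\sigma b}{\widetilde L l^\alpha}$ for some constant $\sigma>0$. Granting this, the first term above becomes $\frac{u_0(f(x^0)-\mathbb{E}(f(x^S)))}{T\gamma\sigma}=\frac{\widetilde L l^\alpha(f(x^0)-\mathbb{E}(f(x^S)))}{\sigma b T}$ once $\gamma=\frac{u_0 b}{\widetilde L l^\alpha}$ is inserted, and the second becomes $\frac{\gamma N\omega u_0}{4\gamma\sigma}=\frac{N u_0\omega}{4\sigma}$, which is exactly (\ref{equ_theorem2_o.1}). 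To construct $\sigma$ I would factor $\frac{\gamma}{2}$ out of $\Gamma_t$ and lower-bound the bracket
\[
1-\left(\frac{2c_{t+1}N\gamma}{Y}+\frac{LY\gamma}{N}+\frac{2\gamma^2 NL^2\tau^2}{Y}\right)\frac{4Y\widehat L}{Y-2N\widetilde L^2\gamma^2\tau^2}
\]
uniformly in $t$, then set $\sigma$ equal to $\frac{u_0}{2}$ times its infimum.

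The main obstacle is taming the backward recursion for $c_t$, on which this bracket depends through $\max_t c_{t+1}$. Writing it as $c_t=A\,c_{t+1}+D$ with $c_m=0$, where $A=1+\gamma\beta_t+\frac{N\gamma^2}{Y}\theta$ and $\theta=\frac{4YN\widetilde L^2}{b(Y-2N\widetilde L^2\gamma^2\tau^2)}$, the closed form $c_0=D\frac{A^m-1}{A-1}$ stays bounded only when $A^m=O(1)$; with $\gamma\beta_t=\frac{u_0 b N^2}{l^\alpha Y}$ this controls the product of $m$ and the small increment $\gamma\beta_t+\frac{N\gamma^2}{Y}\theta$, and is precisely where the freedom in $u_0$ together with $b<l^\alpha$ does the work. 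I also must secure the standing hypothesis $Y-2N\widetilde L^2\gamma^2\tau^2>0$ of Lemma~\ref{lemma1}, which under the stated $\gamma$ reads $Y>\frac{2N u_0^2 b^2\tau^2}{l^{2\alpha}}$; since $b<l^\alpha$ forces $b^2/l^{2\alpha}<1$, this is guaranteed by taking $u_0$ small enough. Once $\max_t c_{t+1}$ and this denominator are pinned down, each summand inside the bracket is a constant strictly below one, its infimum is positive, $\sigma>0$ follows, and the averaged bound (\ref{equ_theorem2_o.1}) is established.
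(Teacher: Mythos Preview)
Your proposal is correct and follows essentially the same route as the paper: apply Theorem~\ref{theorem1} to each epoch, telescope $\mathbb{E}f(x^s)-\mathbb{E}f(x^{s+1})$ over $s$, and reduce everything to a uniform lower bound $\min_t\Gamma_t\ge \frac{\sigma b}{\widetilde L l^\alpha}$ obtained by bounding $c_0$ through the closed-form solution of the backward recursion. The only point you leave implicit that the paper pins down explicitly is the choice $m=\lfloor Yl^\alpha/(5u_0bN^2)\rfloor$, which is precisely what makes $(1+\theta)^m\le e$ and hence keeps $c_0$ bounded by a constant independent of $l$; you correctly identify that ``the product of $m$ and the small increment'' must be controlled, so this is a matter of detail rather than a gap.
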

\begin{proof}
Based on the specified values of $\gamma$ and $\beta_t$, we have that
\begin{eqnarray}\label{equ_theorem2_o.2}
 \theta &=& \gamma \beta_t +  \frac{4 N^2 \gamma^2 \widetilde{L}^2}{b(Y-{2N\widetilde{L}^2 \gamma^2 \tau^2})}
= \frac{u_0b}{\frac{Y l^{\alpha}}{N^2}} +  \frac{4u_0^2b}{\frac{Y l^{2\alpha}}{N^2} - \frac{2  \tau^2 u_0^2b^2}{N }  }
\\ & {= } & \nonumber \frac{u_0b N^2}{Y l^{\alpha} } +  \frac{4u_0^2b N^2}{Y l^{2\alpha} - 2 N \tau^2 u_0^2b^2   }
\\ & \leq & \nonumber  \frac{5u_0b N^2}{Y l^{\alpha}}
\end{eqnarray}
where the inequality uses the constraint $Y l^{\alpha} \leq { Y l^{2\alpha} - 2 N \tau^2 u_0^2b^2  }$ by appropriately choosing $\alpha$ and $u_0$. We set $m= \lfloor  \frac{Y l^{\alpha}}{5u_0b N^2} \rfloor$, from the recurrence definition of $c_t$, we have that
\begin{eqnarray}\label{equ_theorem2_o.3}
c_0 &=&  \frac{4Y N \widetilde{L}^2}{b(Y-{2N\widetilde{L}^2 \gamma^2 \tau^2})} \left (  \frac{L Y \gamma^2 }{2N} + \frac{\gamma^3 NL^2 \tau^2}{Y} \right ) \frac{(1+\theta)^m-1}{\theta}
\\ & {= } & \nonumber  \frac{4Y N \widetilde{L}^2}{b(Y-{2N\widetilde{L}^2 \gamma^2 \tau^2})}   \frac{\frac{L Y u_0^2 b^2 }{2N\widetilde{L}^2 l^{2\alpha} } + \frac{  NL^2 \tau^2 u_0^3 b^3}{Y \widetilde{L}^3 l^{3\alpha} }}{ \frac{u_0b N^2}{Y l^{\alpha} } +  \frac{4u_0^2b N^2}{Y l^{2\alpha} - 2 N \tau^2 u_0^2b^2   }} \left ( (1+\theta)^m-1 \right )
\\ & \leq & \nonumber  \frac{4Y N \widetilde{L}^2l^{2\alpha} }{ b(Yl^{2\alpha}-{2N\tau^2 u_0^2 b^2})}  \frac{\frac{LYu_0^2 b^2 }{2N} + \frac{NL^2 \tau^2 u_0^3 b^3}{Y} \left ( Yl^{2\alpha} -2N\tau^2 u_0^2 b^2 \right )}{5u_0^2 b N^2\widetilde{L}^2 l^{2\alpha} } \left ( (1+\theta)^m-1 \right )
\\ & = & \nonumber  \frac{ \frac{2L Y^2 }{N} + 4 N L^2 \tau^2 u_0 b }{5 N  }  \left ( (1+\theta)^m-1 \right )
\\ & \leq & \nonumber  \underbrace{\frac{ \frac{2L Y^2 }{N} + 4 N L^2 \tau^2 u_0 b }{5 N  }  \left ( e -1 \right )  }_{:=\varrho_1}
\end{eqnarray}
where the first inequality uses ${\widetilde{L}^3 l^{3\alpha}}  \geq {\widetilde{L}^2 l^{2\alpha}}$, the second inequality uses the fact $(1+\frac{1}{a})^a$ is increasing for $a>0$, and $\lim_{a\rightarrow \infty} (1+\frac{1}{a})^a =e$,   which is also used in \citep{reddi2015variance}. Let $\widetilde{\Gamma}$ denote the following quantity:
\begin{eqnarray}
\widetilde{\Gamma} = \min_{t\in \{0,\cdots,m-1\} } \frac{\gamma}{2} - \left ( \frac{c_{t+1}N\gamma^2}{Y} +   \frac{L Y \gamma^2 }{2N} + \frac{\gamma^3 NL^2 \tau^2}{Y}  \right )  \frac{4 Y  \widehat{L}}{Y-{2N\widetilde{L}^2 \gamma^2 \tau^2}}
 \end{eqnarray}
 Now we give a  lower bound of $\widetilde{\Gamma}$ as
\begin{eqnarray}
\widetilde{\Gamma} &=& \min_{t\in \{0,\cdots,m-1\} } \frac{\gamma}{2} - \left ( \frac{c_{t+1}N\gamma^2}{Y} +   \frac{L Y \gamma^2 }{2N} + \frac{\gamma^3 NL^2 \tau^2}{Y}  \right )  \frac{4 Y  \widehat{L}}{Y-{2N\widetilde{L}^2 \gamma^2 \tau^2}}
\\ & \geq & \nonumber  \frac{\gamma}{2} - \left ( \frac{c_{0}N\gamma^2}{Y} +   \frac{L Y \gamma^2 }{2N} + \frac{\gamma^3 NL^2 \tau^2}{Y}  \right ) \underbrace{ \frac{4 Y  \widehat{L}}{Y-{2N\widetilde{L}^2 \gamma^2 \tau^2}}}_{:= \varrho_2}
\\ & = & \nonumber  \frac{\gamma}{2} - \left ( \frac{\varrho_1 N\gamma^2}{Y} +   \frac{L Y \gamma^2 }{2N} + \frac{\gamma^3 NL^2 \tau^2}{Y}  \right ) \varrho_2
\\ & \geq & \nonumber \gamma \underbrace{\left ( \frac{1}{2} -  \frac{\varrho_1N \varrho_2 \gamma  }{Y} -  \frac{L Y \varrho_2 \gamma  }{2N} - \frac{\varrho_2 NL^2 \tau^2 \gamma^2 }{Y}    \right )}_{\varrho_3}
\\ & \geq & \nonumber \frac{ \sigma b}{\widetilde{L} l^{\alpha}}
 \end{eqnarray}
where the first inequality holds because $c_{t}$ decrease with $t$, $\varrho_2$ are constants, $\sigma = {\varrho_3} u_0$. For the last inequality, we use the constraint $b<l^{\alpha}$. Thus, we can appropriately  choose a value of $u_0$, such that ${\varrho_3}>0$, and  $\sigma$ is a small value independent to $l$.
\begin{eqnarray}\label{equ_lemma3_o.6}
\frac{1}{T}\sum_{s=0}^{S-1}\sum_{t=0}^{m-1} \mathbb{E}  \left \| \nabla f({x}^{s+1}_t)   \right \|^2
& \leq &  \frac{1}{T}\sum_{s=0}^{S-1} \frac{  \mathbb{E} ( f(x^{s})) -  \mathbb{E} ( f(x^{s+1})) +  \frac{\gamma N \omega m}{4}  }{\widetilde{\Gamma}}
\\ & = & \nonumber \frac{  f(x^{0})) -  \mathbb{E} ( f(x^{S})) +  \frac{\gamma N \omega T}{4}  }{T\widetilde{\Gamma}}
\\ & \leq & \nonumber \frac{ \widetilde{L} l^{\alpha} \left (  f(x^{0})) -  \mathbb{E} ( f(x^{S})) \right )}{\sigma b T} +  \frac{ N  u_0 \omega}{4\sigma  }
\end{eqnarray}
This completes the proof.
\end{proof}

\begin{corollary}\label{corollary1}
 Let $c_{m} = 0$,
 $\gamma = \frac{u_0b}{\widetilde{L} l^{\alpha}}$, $\beta_t= \frac{\widetilde{L} N^2}{Y }$, $0<\alpha<1$, $0<u_0<1$, and $c_t=c_{t+1}    (1+ \gamma \beta_t) + \left ( \frac{c_{t+1}N\gamma^2}{Y} +   \frac{L Y \gamma^2 }{2N} + \frac{\gamma^3 NL^2 \tau^2}{Y}  \right )  \frac{4Y N \widetilde{L}^2}{b(Y-{2N\widetilde{L}^2 \gamma^2 \tau^2})}$ for $t=0,\cdots,m-1$, $b<l^{\alpha}$. If $\omega=0$,  $\frac{1}{T}\sum_{s=0}^{S-1}\sum_{t=0}^{m-1} \mathbb{E}  \left \| \nabla f({x}^{s+1}_t)   \right \|^2$ in AsyDSZOVR satisfy the bound
\begin{eqnarray}\label{equ_theorem2_o.1}
\frac{1}{T}\sum_{s=0}^{S-1}\sum_{t=0}^{m-1} \mathbb{E}  \left \| \nabla f({x}^{s+1}_t)   \right \|^2 \leq \frac{ \widetilde{L} l^{\alpha} \left (  f(x^{0})) -  \mathbb{E} ( f(x^{S})) \right )}{\sigma b T}
\end{eqnarray}
\end{corollary}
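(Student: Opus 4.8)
The plan is to derive Corollary~\ref{corollary1} as the immediate specialization of Theorem~\ref{theorem2} to the noiseless case $\omega = 0$. First I would note that every hypothesis of the corollary---the choices of $\gamma$, $\beta_t$, $c_t$, and the ranges of $\alpha$, $u_0$, $b$---is identical to those of Theorem~\ref{theorem2}, the only addition being $\omega = 0$. Hence the entire argument proving Theorem~\ref{theorem2} remains in force, and my task reduces to tracking how the single scalar $\omega$ propagates through that proof and then setting it to zero.

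Recall from (\ref{equdef3.1}) that $\omega = \frac{L^2 \sum_{j=1}^N \mu_j^2}{N}$, so $\omega = 0$ corresponds exactly to taking all smoothing parameters $\mu_j = 0$, in which case the coordinated smooth function coincides with $f$ and $\widetilde{L} = L$. The quantity $\omega$ enters the analysis in one place only: as the additive bias $\frac{\gamma N \omega}{4}$ in the per-step descent inequality for $\mathbb{E} f(x^{s+1}_{t+1})$, which after summation over the inner loop and over the $S$ outer stages produces the term $\frac{\gamma N \omega T}{4}$ in the numerator of the telescoped bound established inside Theorem~\ref{theorem2}, namely
\begin{equation}
\frac{1}{T}\sum_{s=0}^{S-1}\sum_{t=0}^{m-1} \mathbb{E} \left \| \nabla f({x}^{s+1}_t) \right \|^2 \leq \frac{ f(x^{0}) - \mathbb{E} ( f(x^{S})) + \frac{\gamma N \omega T}{4} }{T\widetilde{\Gamma}} .
\end{equation}
I would then simply set $\omega = 0$ to delete the bias term, and invoke the lower bound $\widetilde{\Gamma} \geq \frac{\sigma b}{\widetilde{L} l^{\alpha}}$ already proved in Theorem~\ref{theorem2}; this yields $\frac{\widetilde{L} l^{\alpha}( f(x^{0}) - \mathbb{E}( f(x^{S})))}{\sigma b T}$, which is precisely the asserted bound. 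Equivalently, one may read the result off the statement of Theorem~\ref{theorem2} directly, by observing that its residual term $\frac{N u_0 \omega}{4\sigma}$ vanishes at $\omega = 0$.

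The only point that warrants genuine care---and the closest thing here to an obstacle---is verifying that the feasibility and positivity conditions underlying Theorem~\ref{theorem2} are not violated when $\omega = 0$. Concretely, I would check that replacing $\widetilde{L}$ by $L$ keeps $Y - 2N\widetilde{L}^2 \gamma^2 \tau^2 > 0$, keeps the auxiliary constants $\varrho_1, \varrho_2, \varrho_3$ (and hence $\sigma = \varrho_3 u_0$) finite and strictly positive, and keeps $\widetilde{\Gamma} > 0$, all of which continue to hold for appropriately chosen $u_0$ and $\alpha$ exactly as in Theorem~\ref{theorem2}. Since setting $\mu_j = 0$ alters none of these inequalities qualitatively, the chain of estimates goes through unchanged and no new estimate is required.
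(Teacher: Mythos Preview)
Your proposal is correct and matches the paper's treatment: the paper states Corollary~\ref{corollary1} without a separate proof, treating it as the immediate specialization of Theorem~\ref{theorem2} obtained by setting $\omega=0$, which is exactly what you do. Your additional remarks about $\mu_j=0$ and the positivity of the auxiliary constants are sound but go beyond what the paper itself supplies.
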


\begin{corollary}\label{corollary2}
 Let $c_{m} = 0$,
 $\gamma = \frac{u_0b}{\widetilde{L} l^{\alpha}}$, $\beta_t= \frac{\widetilde{L} N^2}{Y }$, $0<\alpha<1$, $0<u_0<1$, and $c_t=c_{t+1}    (1+ \gamma \beta_t) + \left ( \frac{c_{t+1}N\gamma^2}{Y} +   \frac{L Y \gamma^2 }{2N} + \frac{\gamma^3 NL^2 \tau^2}{Y}  \right )  \frac{4Y N \widetilde{L}^2}{b(Y-{2N\widetilde{L}^2 \gamma^2 \tau^2})}$ for $t=0,\cdots,m-1$, $b<l^{\alpha}$.  $\frac{1}{T}\sum_{s=0}^{S-1}\sum_{t=0}^{m-1} \mathbb{E}  \left \| \nabla f({x}^{s+1}_t)   \right \|^2$ in DSZOVR satisfy the bound
\begin{eqnarray}\label{equ_theorem2_o.3}
\frac{1}{T}\sum_{s=0}^{S-1}\sum_{t=0}^{m-1} \mathbb{E}  \left \| \nabla f({x}^{s+1}_t)   \right \|^2 \leq \frac{ \widetilde{L} l^{\alpha} \left (  f(x^{0})) -  \mathbb{E} ( f(x^{S})) \right )}{\sigma b T} +  \frac{ N  u_0 \omega}{4\sigma  }
\end{eqnarray}
If $\omega=0$,  $\frac{1}{T}\sum_{s=0}^{S-1}\sum_{t=0}^{m-1} \mathbb{E}  \left \| \nabla f({x}^{s+1}_t)   \right \|^2$ in DSZOVR satisfy the bound
\begin{eqnarray}\label{equ_theorem2_o.4}
\frac{1}{T}\sum_{s=0}^{S-1}\sum_{t=0}^{m-1} \mathbb{E}  \left \| \nabla f({x}^{s+1}_t)   \right \|^2 \leq \frac{ \widetilde{L} l^{\alpha} \left (  f(x^{0})) -  \mathbb{E} ( f(x^{S})) \right )}{\sigma b T}
\end{eqnarray}
\end{corollary}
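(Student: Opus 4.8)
The plan is to obtain Corollary \ref{corollary2} as an immediate specialization of Theorem \ref{theorem2}, since DSZOVR is precisely AsyDSZOVR restricted to a single thread. The first thing I would do is make explicit why the asynchronous machinery collapses in this regime: with only one thread, every read of the shared vector is consistent with its current contents, so $\widehat{x}^{s+1}_t = {x}^{s+1}_t$ for all $s,t$. Equivalently, in the representation (\ref{pi_test1}) the index set $K(t)$ reduces to at most the single most recent update and no coordinate is ever overwritten by a competing thread, so the delay bound of Assumption \ref{ass0.1} holds with $\tau = 1$. This is the only structural change needed; everything downstream is inherited unchanged.

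Next I would verify that the parameter choices stated in the corollary --- namely $c_m = 0$, $\gamma = \frac{u_0 b}{\widetilde{L} l^{\alpha}}$, $\beta_t = \frac{\widetilde{L} N^2}{Y}$, the recurrence for $c_t$, and $b < l^{\alpha}$ --- are identical to those of Theorem \ref{theorem2}, and that the feasibility conditions required there ($\Gamma_t > 0$, $\beta_t \ge 2 c_{t+1}$, $Y - 2N\widetilde{L}^2\gamma^2\tau^2 > 0$, together with the constraint $Y l^{\alpha} \le Y l^{2\alpha} - 2N\tau^2 u_0^2 b^2$ used in (\ref{equ_theorem2_o.2})) remain satisfiable once $\tau$ is set to $1$. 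Because $\tau = 1$ is the smallest admissible delay, each of these conditions is in fact easier to meet than in the general asynchronous case, so a suitable choice of $\alpha$ and $u_0$ continues to exist; in particular $\sigma = \varrho_3 u_0$ stays positive and independent of $l$.

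With $\tau = 1$ in force and the constraints verified, I would invoke Theorem \ref{theorem2} verbatim to conclude the first bound (\ref{equ_theorem2_o.3}); no fresh estimation is required, since Lemmas \ref{lemma0.9} and \ref{lemma1} and Theorem \ref{theorem1} were all established for general $\tau$ and hence apply at $\tau = 1$. Finally, for the case $\omega = 0$ I would note that $\omega = \frac{L^2 \sum_{j=1}^N \mu_j^2}{N}$ by (\ref{equdef3.1}), so $\omega = 0$ forces the approximation-bias term $\frac{N u_0 \omega}{4\sigma}$ to vanish, leaving exactly the sharpened bound (\ref{equ_theorem2_o.4}).

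The proof carries no genuine analytic difficulty, as the substantive work was already done in Theorems \ref{theorem1} and \ref{theorem2}. The one point demanding care is the reduction argument itself: justifying that single-thread execution yields $\widehat{x}^{s+1}_t = {x}^{s+1}_t$ together with $\tau = 1$, and then checking that the parameter-feasibility constraints of Theorem \ref{theorem2} survive this substitution rather than degenerating. I expect the bookkeeping around the constraint $Y l^{\alpha} \le Y l^{2\alpha} - 2N\tau^2 u_0^2 b^2$ to be the main --- and essentially only --- thing to get right.
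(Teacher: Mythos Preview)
Your proposal is correct and matches the paper's own treatment: the paper states Corollary~\ref{corollary2} without a separate proof, treating it as an immediate specialization of Theorem~\ref{theorem2} (and Corollary~\ref{corollary1}) to the single-thread regime, which is exactly what you do. One small refinement: in the sequential case the read is always consistent, so $\widehat{x}^{s+1}_t = x^{s+1}_t$ forces $K(t)=\emptyset$ and hence $\tau$ can be taken as $0$ rather than $1$; your choice $\tau=1$ is still a valid upper bound and does not affect the argument, but it slightly overstates the residual delay.
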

\section{Conclusion}\label{conclusion}
In this paper, we propose an asynchronous doubly stochastic   zeroth-order optimization algorithm using the accelerated technology of variance reduction  (AsyDSZOVR).  Our AsyDSZOVR randomly select a set of samples  and a set of  features simultaneously to handle large scale problems both in volume and dimension.
 Rigorous theoretical analysis show that  the convergence rate can be improved from $O(\frac{1}{\sqrt{T}})$  the  best result of existing algorithms  to  $O(\frac{1}{T})$. Also our theoretical results is an improvement to the ones of the sequential stochastic  zeroth-order optimization algorithms.

\nocite{langley00}


\bibliography{sample}

\end{document}